\def\BibTeX{{\rm B\kern-.05em{\sc i\kern-.025em b}\kern-.08em
    T\kern-.1667em\lower.7ex\hbox{E}\kern-.125emX}}
\begin{document}
\def \bw {\tilde{\mathbf{w}}}
\def \bx {\tilde{\mathbf{x}}}
\def \X {\mathcal{X}}
\def \Y {\mathcal{Y}}
\def \xx {\mathbf{x}}
\def \I {\mathbb{I}}
\def \R {\mathbb{R}}
\def \yy {\mathbf{y}}
\def \ww {\mathbf{w}}
\def \thetaa  {\mbox{\boldmath $\theta$}}
\def \lambdaa  {\mbox{\boldmath $\lambda$}}
\def \muu  {\mbox{\boldmath $\mu$}}
\def \etaa  {\mbox{\boldmath $\eta$}}
\def \deltaa  {\mbox{\boldmath $\delta$}}
\def \alphaa  {\mbox{\boldmath $\alpha$}}
\def \phii  {\mbox{\boldmath $\phi$}}
\def \I {\mathbb{I}}
\def \zero {\mathbf{0}}

\newtheorem{theorem}{Theorem}
%\AfterEndEnvironment{theorem}{\noindent\ignorespaces}
\newtheorem{lemma}{Lemma}
%\AfterEndEnvironment{lemma}{\noindent\ignorespaces}
\newtheorem{corr}{Corollary}
%\AfterEndEnvironment{corr}{\noindent\ignorespaces}
%\newenvironment{proof}{Proof:}{\hfill \tikz \draw[fill] (0,0) rectangle (0.25,0.25);}

\title{Exact Passive-Aggressive Algorithms for Learning to Rank Using Interval Labels
}

\author{\IEEEauthorblockN{Naresh Manwani}
\IEEEauthorblockA{\textit{Machine Learning Lab} \\
\textit{IIIT Hyderabad, India} \\
nareshmanwani@gmail.com}
\and
\IEEEauthorblockN{Mohit Chandra}
\IEEEauthorblockA{\textit{Machine Learning Lab} \\
\textit{IIIT Hyderabad, India} \\
mohit.chandra@research.iiit.ac.in}
}

\maketitle

\begin{abstract}
 In this paper, we propose exact passive-aggressive (PA) online algorithms for learning to rank. The proposed algorithms can be used even when we have interval labels instead of actual labels for examples. The proposed algorithms solve a convex optimization problem at every trial. We find exact solution to those optimization problems to determine the updated parameters. We propose support class algorithm (SCA) which finds the active constraints using the KKT conditions of the optimization problems. These active constrains form support set which determines the set of thresholds that need to be updated. We derive update rules for PA, PA-I and PA-II. We show that the proposed algorithms maintain the ordering of the thresholds after every trial. We provide the mistake bounds of the proposed algorithms in both ideal and general settings. We also show experimentally that the proposed algorithms successfully learn accurate classifiers using interval labels as well as exact labels. Proposed algorithms also do well compared to other approaches. 
\end{abstract}

\begin{IEEEkeywords}
Ranking, online learning, passive-aggressive, interval labels, mistake bounds.
\end{IEEEkeywords}

\section{Introduction}
Ranking or ordinal regression is an important problem in machine learning. The objective here is to learn a mapping from the example space to an ordered set of the labels. The ordinal classifiers are routinely used in social sciences, information retrieval or computer vision. Ranking using ordinal regression is frequently used in settings where it is natural to rank or rate instances. For example, in online retail stores (e.g. Amazon, eBay etc.), product ratings can be generated using ordinal regression considering customer reviews as features. Detecting the age of a person from its face image, determining a users interest level in a movie using the user's past movie ratings etc are some other examples where ordinal regression is useful.

A ranking classifier is defined using a linear function and a set of $K-1$ thresholds ($K$ be the number of classes). Each threshold corresponds to a class. Thus, the thresholds should have the same order as their corresponding classes. The rank (class) of an observation is predicted based on the relative position of the linear function value with respect to different thresholds. Non-linear ranking classifiers can also be learnt by using an appropriate nonlinear feature transformation with the help of kernel methods. The discriminative methods for learning ranking classifier are based on minimizing the empirical risk with an appropriate regularization. Different batch learning algorithms for ordinal regression based on large margin have been discussed in \cite{Shashua:2002,Chu:2005,Herbrich99}. Batch algorithms use all the data simultaneously for learning the parameters. In the case of big data, it requires huge amount of computation time and memory to solve the optimization problem. In contrast, online learning updates its hypothesis based on a single example at every instant. Perceptron algorithm is extended for online learning of ranking classifiers \cite{Crammer:2001}. Harrington et. al \cite{Harrington:2003,DBLP:journals/corr/ChaudhuriT15} proposed online learning of large margin classifiers for ranking. Passive-aggressive (PA) \cite{Crammer:2006} is another principled method of learning classifiers in online fashion. The updates made by PA are more aggressive to make the loss incurred on the current example zero. This approach can be applied to learning multi-class classification, regression, multitask learning etc. A variant of passive-aggressive learning for multi-class classifier is proposed in \cite{Matsushima2010ExactPA}. PA algorithms for ranking have not been well addressed in the literature.

In all the above approaches, it is assumed that the training data contains exact labels for each observation. However, in many situations, we get interval labels instead of exact label \cite{pmlr-v39-antoniuk14}. For example, in case of predicting product ratings, we can get an entire interval of ratings (etc. 1-3, 4-7, 8-10) from different customers. Similarly, while learning a model for predicting human age, we can get a range of values in which the actual age of the person lies (e.g. 0-9, 10-19, 20-29, $\cdots$, 90-99). A large margin batch algorithm for learning to rank is proposed in \cite{pmlr-v39-antoniuk14} using interval labels.
 
In this paper, we propose passive-aggressive algorithms for ranking. These algorithms not only utilize the ordering of the class labels, but also are generic enough to accept both exact as well as interval labels in the training data. To the best of our knowledge this is the first work in that direction. 
Our key contributions in this paper are as follows. 
%The proposed PA algorithm is based on the interval insensitive loss $L_{IMC}$ \cite{pmlr-v39-antoniuk14}. In every trial, PA finds new parameters which are close to the old parameters such that the loss $L_{IMC}$ on the current example is minimized.
\begin{itemize}
\item[a] We derive update rules for PA, PA-I and PA-II. PA algorithms update the parameters at a trial $t$ by minimizing convex optimization problems. We find the exact solution of these optimization problems. We propose support class algorithm (SCA) which, at any trial, finds active constraints in the KKT optimality conditions to find the support class set. Support class set describes the thresholds that need to be updated in addition to the weight vector. We show that SCA correctly finds the support classes.
\item[b] We show that the proposed PA algorithms implicitly maintain the ordering of the thresholds after every trial. 
\item[c] We provide the mistake bounds for the proposed algorithms in both general and ideal cases.  
\item[d] We perform extensive simulations of the proposed algorithms on various datasets and show their effectiveness by comparing the results with different other algorithms.
\end{itemize}
This paper is organized as follows. In Section~2, we discuss a generic framework of learning to rank using interval (exact) labels. In Section~3, we derive the update rules for PA, PA-I and PA-II. The order preservation guarantees of proposed algorithms is discussed in Section~4. In Section~5, we discuss the mistake bounds. Experiments are presented in Section~6. We conclude our paper with some remarks in Section~7.

\section{Learning to Rank Using Interval (Exact) Labels}
Let $\X \subset \R^d$ be the instance space and $\Y = \{1,\ldots,K\}$ be the label space. For every instance $\xx \in \X$, an interval label $[y_l,y_r]\in \Y\times \Y$ is given. The exact (actual) label $y$ lie in the interval label. When $y_l=y_r$ for all the examples, it becomes the exact label scenario. 
Let $S = \{(\xx^1,y^1_l,y_r^1),\ldots,(\xx^T,y^T_l,y_r^T)\}$ be the training set. The goal here is to learn a ranking classifier using the training set $S$ which can predict accurate label for an unseen example. A ranking classifier consists of a function $f:\X \rightarrow \R$ and ordered thresholds $\theta_1\leq \theta_2 \leq \ldots \leq \theta_{K-1}$. Ranking classifier is defined as
\begin{align}
\label{eq:ranking-function}
h(\xx)=\min_{i\in [K]}\big{\{}i\;:\;f(\xx)-\theta_i <0 \big{\}}
\end{align}
where $\theta_K=\infty$ and $[K]=\{1,\ldots,K\}$. Let $f$ be a linear function of $\xx$, which means $f(\xx)=\ww.\xx$. We can use the kernel trick to generalize for non-linear functions.
Since we consider interval labels for each example, we use interval insensitive loss \cite{pmlr-v39-antoniuk14} to capture the discrepancy between the interval label and the predicted label.
\begin{align*}
L_{I}^{MAE}(f(\xx),\thetaa,y_l,y_r)=\sum_{i=1}^{y_l-1}\I_{\{f(\xx)< \theta_i\}}+\sum_{i=y_r}^{K-1}\I_{\{f(\xx)\geq \theta_i\}}
\end{align*}
Where subscript $I$ stands for interval and superscript $MAE$ stands for mean absolute error. This, loss function takes value $0$ whenever $\theta_{y_l} \leq f(\xx) \leq \theta_{y_r}$. However, this loss function is discontinuous.
A convex surrogate \cite{pmlr-v39-antoniuk14} of this loss function is as follows.
\begin{align}
\nonumber &L_{IMC}(f(\xx),\thetaa,y_l,y_r) = \sum_{i=1}^{y_l-1}l_i + \sum_{i=y_r}^{K-1}l_i\\
&=\sum_{i=1}^{y_l-1}[1-f(\xx)+\theta_i]_+ + \sum_{i=y_r}^{K-1}[1+f(\xx) - \theta_i]_+ \label{eq:loss}
\end{align}
where $\thetaa = [\theta_1\;\ldots\;\theta_{K-1}]$ and $[z]_+=\max(0,z)$. When $y_l=y_r$, then the loss above leads to the implicit threshold constraint formulation  described in \cite{Chu:2005}. 

\section{Exact Passive Aggressive Algorithms for Learning to Rank}
Passive-aggressive (PA) \cite{Crammer:2006} is a principled approach for supervised learning in online fashion. Here, we develop PA algorithms for ranking which can learn even when we have interval labels. The proposed approach is based on the interval insensitive loss described in Eq.~(\ref{eq:loss}). We derive the update equations for PA, PA-I, PA-II separately.

\subsection{PA Algorithm}
Let $\xx^t$ be the example being observed at trial $t$. Let $\ww^t\in \R^d$ and $\thetaa \in \R^{K-1}$ be the parameters of the ranking function at time $t$. We now use these parameters to predict the label. Then we observe the actual label(s). PA algorithm finds $\ww^{t+1}$ and $\thetaa^{t+1}$ which are closest to $\ww^{t}$ and $\thetaa^{t}$ such that the loss $L_{IMC}$ becomes zero for the current example. Thus, 
\begin{align}
\nonumber  \ww^{t+1},\thetaa^{t+1}  
%&= \underset{\ww, \thetaa}{\arg\min} \;\; \frac{1}{2} \lVert \ww - \ww^{t} \rVert^{2} +\frac{1}{2}\Vert \thetaa - \thetaa^t \Vert^{2} \\
%\nonumber & \;\;\;\;\;s.t.\;\; L_{IMC}(\ww.\xx^t,\thetaa,y_l^t,y_r^t)=0\\
& = \underset{\ww, \thetaa}{\arg\min} \;\; \frac{1}{2} \lVert \ww - \ww^{t} \rVert^{2} + \Vert \thetaa-\thetaa^t\Vert^{2}\\
& \;\;\;\;\;s.t. \begin{cases}
\mathbf{w}.\mathbf{x}^{t} - \theta_{i} \geq 1 &  i=1,\ldots, y_{l}^{t}-1 \\
 \mathbf{w}.\mathbf{x}^{t} - \theta_{i} \leq -1 & i=y_{r}^{t}, \cdots, K-1
\end{cases}
\end{align}
Lagrangian for the the above objective function is as follows.
\begin{align*}
 &\mathcal{L}(\mathbf{w}, \thetaa, \lambdaa, \muu) =  \frac{1}{2} \lVert \mathbf{w} - \mathbf{w}^{t} \rVert^{2}  + \frac{1}{2}\Vert\thetaa - \thetaa^t \Vert^{2}  \\
 &+ \sum_{i=1}^{y_{l}^{t}-1} \lambda_{i} \ ( 1 + \theta_{i} - \mathbf{w} \cdot \mathbf{x}^{t} ) + \sum_{i=y_{r}^{t}}^{K-1} \mu_{i} \ ( 1 + \mathbf{w} \cdot \mathbf{x}^{t} - \theta_{i} ) 
\end{align*}
where $\lambdaa=[\lambda_1\;\;\ldots\;\;\lambda_{y_l^t-1}]$, $\muu=[\mu_{y_r^t}\;\;\ldots\;\;\mu_{K-1}]$ such that $\lambda_{i} \geq 0\;i=1, \ldots, y_{l}^{t}-1$ and $\mu_{i} \geq 0,\;i=y_{r}^{t}, \ldots, K-1$. The KKT conditions of optimality are as follows.
\begin{align*}
& \mathbf{w} =  \mathbf{w}^{t}   +   (\sum_{i=1}^{y_{l}^{t}-1} \lambda_{i}^{t} -\sum_{i=y_{r}^{t}}^{K-1} \mu_{i}^{t})\mathbf{x}^{t} \\
& \theta_{i} = \theta_{i}^{t}  - \lambda_{i}^{t};\;\lambda_{i}\geq 0,\;i=1\ldots y_l^t-1 \\
& \mu_{i}\geq 0;\;\theta_{i} = \theta_{i}^{t}  + \mu_{i}^{t},\;i=y_r^t\ldots K-1\\
& 1 + \theta_{i} - \mathbf{w} \cdot \mathbf{x}^{t}\leq0;\; \lambda_{i} \ ( 1 + \theta_{i} - \mathbf{w} \cdot \mathbf{x}^{t} )=0,\;\forall i=1\ldots y_l^t-1\\
& 1 + \mathbf{w} \cdot \mathbf{x}^{t} - \theta_{i} \leq0;\; \mu_{i} ( 1 + \mathbf{w} \cdot \mathbf{x}^{t} - \theta_{i} )=0,\; \forall i=y_r^t\ldots K-1
\end{align*}
Let $S_l^t = \{1\leq i \leq y_l^t-1|\lambda_i^t>0\}$ be the left support set. Similarly, let $S_r^t=\{y_r^t \leq i\leq K-1|\mu_i>0\}$ be the right support set. Thus, optimal $\ww$ can be rewritten as 
$\ww = \ww^t + (\sum_{i \in S_l^t}\lambda_i - \sum_{i \in S_r^t}\mu_i)\xx^t=a^t\xx^t$ where $a^t = \sum_{i \in S_{l}^{t}} \lambda_{i}^{t}  -  \sum_{i \in S_{r}^{t}} \mu_{i}^{t}$. Also,
\begin{align}
\label{eq:PA-SC}
\mathbf{w} \cdot \mathbf{x}^{t}- \theta_{i} =\begin{cases}
1, &\forall i \in S_l^t\\
-1,&\forall i \in S_r^t
\end{cases}
\end{align}
 Using optimal $\ww$ in Eq.~(\ref{eq:PA-SC}), we get
\begin{align*}
\lambda_i &= 1-\ww^t.\xx^t-\theta_i^t -a^t\Vert \xx^t\Vert^2= l_i^t - a^t\Vert \xx^t\Vert^2,\;\forall i \in S_l^t\\
\mu_i &= 1-\theta_i^t+\ww^t.\xx^t +a^t\Vert \xx^t\Vert^2=l_i^t + a^t\Vert \xx^t\Vert^2,\;\forall i \in S_r^t
\end{align*}
But, $a^t  = \sum_{i \in S_{l}^{t}}(l_i^t-a^t\Vert \xx^t\Vert^2)-\sum_{i \in S_{r}^{t}}(l_i^t+a^t\Vert \xx^t\Vert^2)=\sum_{i \in S_{l}^{t}}l_i^t-\sum_{i \in S_{r}^{t}}l_i^t - a^t(\vert S_l^t\vert + \vert S_r^t \vert)\Vert \xx^t\Vert^2$. Which means, $a^t = \frac{\sum_{i \in S_{l}^{t}}l_i^t-\sum_{i \in S_{r}^{t}}l_i^t}{1+(\vert S_l^t\vert + \vert S_r^t \vert)\Vert \xx^t\Vert^2}$.
The complete description of the PA algorithm is as given in Algorithm~\ref{SPOR}. Note that PA updates assume that at every trial $t$, sets $S_l^t$ and $S_r^t$ are known. We will now discuss the procedure for determining the support sets $S_l^t$ and $S_r^t$.  
\begin{algorithm}[ht]
\caption{PA Algorithm}\label{SPOR}
\begin{algorithmic}[0]
\State {\bf Input} Training set $S$
\State {\bf Initialize} $\mathbf{w}^{0}$
and $ \thetaa^{0}$
\For{$t=1,\cdots,T$}
\State $\mathbf{x}^{t} \gets$ \textit{randomly sample an instance from $S$}
\State Predict: $\hat{y}^{t} = \mathbf{w}^{t}.\mathbf{x}^{t} $
\State Observe $ y_{l}^{t},y_{r}^{t}$  
\State $ l_i^t = \max(0, 1 + \theta_{i}^{t} -\mathbf{w}^{t}.\mathbf{x}^{t}),\; i=1\ldots y_{l}^{t}-1$ 
\State $l_i^t = \max(0,1 + \mathbf{w}^{t}.\mathbf{x}^{t}-\theta_{i}^{t}),\; i=y_{r}^{t}\ldots K-1 $
\State $S_l^t,S_r^t =$ SCA($l_1^t,\ldots,l_{y_l^t-1}^t,l_{y_r^t}^t,\ldots, l_{K-1}^t,y_l^t,y_r^t\mathbf{x}^{t}$)
\State Update: 
\begin{align*}
\mathbf{w} &= \mathbf{w}^{t} + a^t\mathbf{x}^{t} \\
\theta^{t+1}_i &= \theta_i^t - l_i^t + \lVert \mathbf{x}^{t} \rVert^{2} a^{t},\;\;\forall i \in S_l^{t}\\
\theta_i^{t+1} &= \theta_i^t + l_i^t + \lVert \mathbf{x}^{t} \rVert^{2} a^{t},\;\;\forall i \in S_r^{t}
\end{align*}
\EndFor
\end{algorithmic}
\end{algorithm} 

\subsubsection{Determining Support Sets $S_l^t$ and $S_r^t$} 
Note that the loss decreases as we move away from the correct label range on either side. We initialize with $ S_{l}^t = \{y_l^t-1\} $ and $ S_{r}^t = \{y_r^t\} $. We can easily verify that with this initialization $\lambda_{y_l^t-1}^t,\mu_{y_r^t}^t >0$. We start with considering the the threshold $\theta_{y_l^t-2}^t$ and find corresponding Lagrange multiplier value $\lambda_{y_l^t-2}^t$. If it appears positive, then we add it to the support set $S_l^t$, else consider threshold $\theta_{y_r^t+1}^t$. We check if $\mu_{y_r^t+1}^t$ is positive. If so, we add it to $S_r^t$. We repeatedly check this for all the thresholds.
The detailed approach for constructing support sets is described in Algorithm~\ref{SCA}.
\begin{algorithm}[h]
\caption{Support Class Algorithm (SCA)}\label{SCA}
\begin{algorithmic}[0]
\State \textbf{Input:} $ y_{l}^{t} $, $ y_{r}^{t} $and $l_{i}^{t},\;i=1\ldots,K-1$ 
\State \;\;\;\;\;\;\;\;\; $\thetaa^t = \{ \theta_{1}^{t}, \cdots, \theta_{K-1}^{t} \} $
\State {\bf Initialize: }$ S_{l}^t=\{y_l^t-1\}$, $S_{r}^t = \{y_r^t\}$, flag = 1, p = $ y_{l}^{t}-2 $, q = $ y_{r}^{t}+1 $
\While{flag = 1}
%\State flag = 0
\If{ $p>0$ }
\If{$l_{p}^{t} - \frac{\Vert\mathbf{x}^{t} \Vert^{2} ( l_{p}^{t} + \sum_{j \in S_{l}^t} l_{j}^{t} - \sum_{j \in S_{r}^t} l_{j}^{t})}{1 + \Vert\mathbf{x}^{t} \Vert^{2} ( 1 + \vert S_{l}^t \vert + \vert S_{r}^t \vert ) }  >  0 $}
\State $ S_{l}^t = S_{l}^t \cup \{ p \} $
\State $p=p-1$
\State flag = 1
\Else
\State flag=0
\EndIf
\EndIf
\If{$q<K$}
\If{$l_{q}^{t} + \frac{\Vert\mathbf{x}^{t} \Vert^{2} (\sum_{j \in S_{l}^t} l_{j}^{t} - l_{q}^{t} - \sum_{j \in S_{r}^t} l_{j}^{t})}{1 + \Vert\mathbf{x}^{t} \Vert^{2} ( 1 + \vert S_{r}^t \vert + \vert S_{l}^t \vert ) } > 0 $}
\State $ S_{r}^t = S_{r}^t \cup \{q\} $
\State $q=q+1$
\State flag = 1
\Else
\State flag=0
\EndIf
\EndIf
\EndWhile
\end{algorithmic}
\end{algorithm} 
Following Lemma shows the correctness of the SCA algorithm discussed.
\begin{lemma}
\label{thm1}
 Assume that $S_{l}^t \neq \phi$. Let, $k \notin S_{l}^t$ and $k+1 \in S_{l}^t$. Then, $k' \notin S_l^t,\;\forall k'< k$.
\end{lemma}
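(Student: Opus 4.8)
The plan is to reduce membership in $S_l^t$ to a single scalar threshold test and then read off the claim from the ordering of the thresholds. To set up, for each left index $i=1,\ldots,y_l^t-1$ I would introduce the affine quantity $g_i := 1 - \ww^t\cdot\xx^t + \theta_i^t$, so that $l_i^t = [g_i]_+$. Because the current iterate respects the ordering invariant $\theta_1^t \leq \cdots \leq \theta_{K-1}^t$, the sequence $g_1 \leq g_2 \leq \cdots \leq g_{y_l^t-1}$ is non-decreasing in $i$. Working with $g_i$ rather than $l_i^t$ is deliberate: it removes the nonsmooth $[\,\cdot\,]_+$ and lets both the characterization and the monotonicity be stated for one affine object.

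Next I would convert the KKT conditions into a clean membership test. Using the stationarity relation $\ww = \ww^t + a^t\xx^t$, so that $\ww\cdot\xx^t = \ww^t\cdot\xx^t + a^t\Vert\xx^t\Vert^2$, I split into two cases for a left index $i$. If $i \in S_l^t$ then $\lambda_i > 0$ and its constraint is active; substituting $\theta_i = \theta_i^t - \lambda_i$ into $\ww\cdot\xx^t - \theta_i = 1$ gives $\lambda_i = g_i - a^t\Vert\xx^t\Vert^2$, hence $g_i > a^t\Vert\xx^t\Vert^2$. If $i \notin S_l^t$ then $\lambda_i = 0$, so $\theta_i = \theta_i^t$, and primal feasibility $\ww\cdot\xx^t - \theta_i \geq 1$ rearranges to $g_i \leq a^t\Vert\xx^t\Vert^2$. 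Together these yield the equivalence $i \in S_l^t \iff g_i > a^t\Vert\xx^t\Vert^2$, governed by the single cutoff $a^t\Vert\xx^t\Vert^2$ that is independent of $i$.

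The conclusion is then immediate. The hypothesis $k \notin S_l^t$ gives $g_k \leq a^t\Vert\xx^t\Vert^2$, and for any $k' < k$ monotonicity gives $g_{k'} \leq g_k \leq a^t\Vert\xx^t\Vert^2$, so $k' \notin S_l^t$. Thus $S_l^t$ is an upper block of indices, and the stated downward-contiguity follows; the assumption $k+1 \in S_l^t$ serves only to place $k$ at the boundary of the support set and is not otherwise used.

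The main obstacle is not the final monotonicity step but pinning down the membership equivalence rigorously. Two points need care. First, that a single scalar $a^t$ controls every active left constraint, which is exactly what the stationarity $\ww = \ww^t + (\sum_i \lambda_i - \sum_i \mu_i)\xx^t$ delivers, as opposed to an independent per-index shift. Second, that $S_l^t$ is unambiguous: I would note that the objective is strictly convex in $(\ww,\thetaa)$, so the primal optimum is unique and each $\lambda_i = \theta_i^t - \theta_i$ is determined, making the set $\{i : \lambda_i > 0\}$ well defined. An entirely symmetric argument, using that $g_i$ is replaced by the decreasing right-hand loss quantity, would establish the analogous upper-contiguity for $S_r^t$.
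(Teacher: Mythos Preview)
Your argument is correct, and it is genuinely different from the paper's. The paper argues via the SCA test formula: it writes the hypothetical multiplier value one would get upon adding $k$ to the current $S_l^t$,
\[
\lambda_k^t \;=\; l_k^t - \frac{\Vert\xx^t\Vert^2\big(l_k^t+\sum_{j\in S_l^t}l_j^t-\sum_{j\in S_r^t}l_j^t\big)}{1+\Vert\xx^t\Vert^2(|S_l^t|+1+|S_r^t|)},
\]
observes that this expression is increasing in the scalar $l_k^t$, and uses $l_{k'}^t\le l_k^t$ to conclude the test also fails for $k'<k$. By contrast, you bypass the incremental test entirely: from stationarity and complementary slackness you extract the single-cutoff characterization $i\in S_l^t\iff g_i>a^t\Vert\xx^t\Vert^2$ with $g_i=1-\ww^t\cdot\xx^t+\theta_i^t$, and then the monotonicity of $g_i$ in $i$ immediately gives the downward contiguity. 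Your route is shorter and more structural (it shows $S_l^t$ is an upper interval of indices, not merely that one failed test implies another), and working with the affine $g_i$ rather than the hinge $l_i^t=[g_i]_+$ avoids any casework on the clipping. The paper's computation, on the other hand, is phrased directly in terms of the SCA test and so ties more visibly to the algorithm being justified. Your remarks on uniqueness of the primal optimum, and the observation that the hypothesis $k+1\in S_l^t$ is not actually needed, are both accurate.
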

\begin{proof} We are given that $ k \notin S_{l}^t $. Thus,
\begin{align*}
& \lambda_k^t = l_{k}^{t} - \frac{  \Vert \mathbf{x}^{t} \Vert^{2} (l_{k}^{t}  + \sum_{j \in S_{l}^t} l_{j}^{t} - \sum_{j \in S_{r}^t} l_{j}^{t}) }{1 +  \Vert \mathbf{x}^{t} \Vert^{2} ( \vert S_{l}^t \vert + 1 + \vert S_{r}^t \vert ) } \leq 0 
\end{align*}
$\forall k'<k$, we know that $l_{k'}^t \leq l_k^t$. Now, if we try to add $k'$ in $S_l^t$, then
\begin{align*}
 &\lambda_{k'}^t  = l_{k'}^{t} - \frac{  \Vert \mathbf{x}^{t} \Vert^{2} (l_{k'}^{t}  + \sum_{j \in S_{l}^{t}} l_{j}^{t} - \sum_{j \in S_{r}^{t}} l_{j}^{t}) }{1 +  \Vert \mathbf{x}^{t} \Vert^{2} ( 1+ \vert S_{l}^{t} \vert + \vert S_{r}^{t} \vert ) }\\
 &= \frac{l_{k'}^{t}\left(1 +  \Vert \mathbf{x}^{t} \Vert^{2} ( \vert S_{l}^{t} \vert + \vert S_{r}^{t} \vert )\right)}{1 +  \Vert \mathbf{x}^{t} \Vert^{2} ( 1+ \vert S_{l}^{t} \vert + \vert S_{r}^{t} \vert ) } - \frac{  \Vert \mathbf{x}^{t} \Vert^{2} (\sum_{j \in S_{l}^{t}} l_{j}^{t} - \sum_{j \in S_{r}^{t}} l_{j}^{t}) }{1 +  \Vert \mathbf{x}^{t} \Vert^{2} ( 1+ \vert S_{l}^{t} \vert + \vert S_{r}^{t} \vert ) }\\
 &\leq\frac{l_{k}^{t}\left(1 +  \Vert \mathbf{x}^{t} \Vert^{2} ( \vert S_{l}^{t} \vert + \vert S_{r}^{t} \vert )\right)}{1 +  \Vert \mathbf{x}^{t} \Vert^{2} ( 1+ \vert S_{l}^{t} \vert + \vert S_{r}^{t} \vert ) } - \frac{  \Vert \mathbf{x}^{t} \Vert^{2} (\sum_{j \in S_{l}^{t}} l_{j}^{t} - \sum_{j \in S_{r}^{t}} l_{j}^{t}) }{1 +  \Vert \mathbf{x}^{t} \Vert^{2} ( 1+ \vert S_{l}^{t} \vert + \vert S_{r}^{t} \vert ) }\\
 & =l_{k}^{t} - \frac{  \Vert \mathbf{x}^{t} \Vert^{2} (l_{k}^{t}  + \sum_{j \in S_{l}^{t}} l_{j}^{t} - \sum_{j \in S_{r}^{t}} l_{j}^{t}) }{1 +  \Vert \mathbf{x}^{t} \Vert^{2} ( 1+ \vert S_{l}^{t} \vert + \vert S_{r}^{t} \vert ) }=\lambda_k \leq 0
\end{align*}
Thus, $k' \notin S_l^t$.

\end{proof}
Thus, if a threshold doesn't belong to the left support class $S_{l}^t$ then all the threshold on its left side also don't belong to $ S_{l}^{t} $. Hence, if we start adding the classes in the support class set in decreasing order of respective losses, then this would ensure that we end up with only those classes which have positive Lagrange multiplier. Similarly, it can be shown that if $k-1\in S_{r}^t$ and $k \notin S_r^t$, then $k' \notin S_r^t,\; \forall k'>k$. Which means, if a threshold doesn't belong the right support class $ S_{r}^t $ then all the threshold on its right side also don't belong to $ S_{r}^t $. 

\subsection{PA-I}
The PA-I find the new parameters by minimizing the following objective.
\begin{align*}
\nonumber & \underset{\ww, \thetaa}{\arg\min} \;\; \frac{1}{2} \lVert \ww - \ww^{t} \rVert^{2}  +  \frac{1}{2}\Vert \thetaa - \thetaa^t \Vert^2+ C\left(\sum_{i=1}^{y_l^t-1}\xi_i+\sum_{y_r^t}^{K-1}\xi_i\right)\\
& \;\;\;\;\;s.t. \begin{cases}
\mathbf{w}.\mathbf{x}^{t} - \theta_{i} \geq 1 - \xi_{i} &  i=1,\ldots, y_{l}^{t}-1 \\
 \mathbf{w}.\mathbf{x}^{t} - \theta_{i} \leq -1 + \xi_{i} & i=y_{r}^{t}, \ldots, K-1 \\
 \xi_{i} \geq 0 & i=1,\ldots y_l^t-1, y_r^t\ldots K-1
\end{cases}
\end{align*}
where $C$ is the aggressiveness parameter. We skip the derivation of PA-I updates as it follows the same steps used in case of PA.
\iffalse
discussed Lagrangian of above optimization problem is as follows.
\begin{align*}
&{\cal L}(\ww,\thetaa,\lambdaa,\muu,\etaa,\deltaa)=\frac{1}{2} \lVert \ww - \ww^{t} \rVert^{2}  +  \frac{1}{2}\Vert \thetaa - \thetaa^t \Vert^2\\
&+ C\left(\sum_{i=1}^{y_l^t-1}\xi_i+\sum_{y_r^t}^{K-1}\xi_i\right)+\sum_{i=1}^{y_l^t-1}\lambda_i(1-\xi_i-\ww.\xx^t+\theta_i)\\
&+ \sum_{y_r^t}^{K-1}\mu_i(1-\xi_i+\ww^t.\xx^t-\theta_i)-\sum_{i=1}^{y_l^t-1}\eta_i\xi_i - \sum_{y_r^t}^{K-1}\delta_i\xi_i
\end{align*}
The KKT conditions for optimality are as follows.
\begin{align*}
\ww & = \ww^t + (\sum_{i=1}^{y_l^t-1}\lambda_i-\sum_{y_r^t}^{K-1}\mu_i)\xx^t\\
\theta_i & = \theta_i^t - \lambda_i,\;i=1\ldots y_l^t-1\\
\theta_i & = \theta_i^t + \mu_i, \;i=y_r^t\ldots K-1\\
\lambda_i &\geq 0;\; 1-\xi_i-\ww.\xx^t+\theta_i \leq0;\; \lambda_i(1-\xi_i-\ww.\xx^t+\theta_i)=0,\\
&\hspace{150pt} \forall i=1\ldots y_l^t-1\\
\mu_i &\geq 0;\; 1-\xi_i+\ww.\xx^t-\theta_i \leq 0 ;\; \mu_i(1-\xi_i+\ww.\xx^t-\theta_i)=0\\
&\hspace{150pt} i=y_r^t\ldots K-1\\
\eta_i & \geq 0;\;\xi_i\geq 0,\;i=1\ldots y_l^t-1\\
\delta_i & \geq 0;\;\xi_i\geq 0,\;i=y_r^t\ldots K-1
\end{align*}
As earlier, let $S_l^t = \{1\leq i\leq y_l^t-1\;|\;\lambda_i>0\}$ and $S_r^t = \{y_r^t \leq i \leq K-1\;|\;\mu_i>0\}$. Thus,
\begin{align*}
\ww.\xx^t-\theta_i &= 1-\xi_i,\;i\in S_l^t\\
\ww.\xx^t-\theta_i &= -1+\xi_i,\;i\in S_r^t
\end{align*}
\fi
PA-I updates the parameters as follows.
\begin{align*}
\ww &=\ww^t+(\sum_{i\in S_l^t}\lambda_i - \sum_{i\in S_r^t}\mu_i)\xx^t\\
\lambda_i &= \min(C,l_i^t - a^t\Vert \xx^t \Vert^2),\;i\in S_l^t\\
\mu_i&=\min(C,l_i^t + a^t\Vert \xx^t \Vert^2),\;i\in S_r^t
\end{align*}
where $S_l^t = \{1\leq i\leq y_l^t-1\;|\;\lambda_i>0\}$, $S_r^t = \{y_r^t \leq i \leq K-1\;|\;\mu_i>0\}$ and $a^t= \sum_{i \in S_{l}^{t}} \lambda_{i}^{t} - \sum_{i \in S_{r}^{t}} \mu_{i}^{t}$. PA-I uses the same steps as described in Algorithm~\ref{SPOR} except that it uses a different approach to determine the support sets $S_l^t$ and $S_r^t$. We use an iterative approach to find the support sets.  We first find the values of all the $\lambda_i^t$ and $\mu_i^t$ and then compute $a^t$. We repeat it till all the values get converge. Then we include an $i$ in $S_l^t$ or $S_r^t$ based on whether $\lambda_i>0$ or $\mu_i>0$. Support class algorithm (SCA-I) for PA-I is discussed in Algorithm~\ref{SCA-I}.

\begin{algorithm}[h]
\caption{Support Class Algorithm-I (SCA-I)}\label{SCA-I}
\begin{algorithmic}[0]
\State \textbf{Input:} $ y_{l}^{t} $, $ y_{r}^{t} $, $ \ww^t.\xx^t $ and $l_{i}^{t},\;i\in[K-1]$ 
\State \;\;\;\;\;\;\;\;\; $\Theta^t = \{ \theta_{1}^{t}, \cdots, \theta_{K-1}^{t} \} $
\State {\bf Initialize: }$ S_{l}^t=\{y_l^t-1\}$, $S_{r}^t = \{y_r^t\} $, $p = y_{l}^{t}-2 $,
\State \hspace{0.6in}$q =  y_{r}^{t}+1 $
\While{$ \lambda_{i}^1,\ldots,\lambda_{y_l^t-1}^t,\mu_{y_r^t}^t,\ldots,\mu_{K-1}^t$ do not converge}
\For{ $ i=\text{p}, \cdots, 1$ }
\If{$\min(C, l_{i}^{t} - a^{t}\Vert\mathbf{x}_{t}\Vert^{2}) > 0 $}
\State $ S_{l}^t = S_{l}^t \cup \{ i \}$
\Else
\If{$ i \in S_{l}^{t} $}
\State $ S_{l}^{t} = S_{l}^{t} - \{ i \} \ ; \ \lambda_{i}^{t} = 0  $
\EndIf
\EndIf
\EndFor
\For{ $ i=\text{q}, \cdots, K-1$ }
\If{$\min(C, l_{i}^{t} + a^{t}\Vert\mathbf{x}_{t}\Vert^{2}) > 0 $}
\State $ S_{r}^t = S_{r}^t \cup \{ i \}$
\Else
\If{$ i \in S_{r}^{t} $}
\State $ S_{r}^{t} = S_{r}^{t} - \{ i \} \ ; \ \mu_{i}^{t} = 0  $
\EndIf
\EndIf
\EndFor
\EndWhile
\end{algorithmic}
\end{algorithm}

\subsection{PA-II}
PA-II finds the new parameters by minimizing the following objective function.
\begin{align}
\nonumber  \ww^{t+1},\thetaa^{t+1} & = \underset{\ww, \thetaa}{\arg\min} \;\; \frac{1}{2} \lVert \ww - \ww^{t} \rVert^{2} \ + \ \frac{1}{2}\Vert \thetaa-\thetaa^t \Vert^2  \\ \nonumber &\;\;\;\;+C\left(\sum_{i=1}^{y_l^t-1} \xi_{i}^{2} + \sum_{i=y_r^t}^{K-1}\xi_{i}^2\right) \\
& \;\;\;\;\;s.t. \begin{cases}
\mathbf{w}.\mathbf{x}^{t} - \theta_{i} \geq 1 - \xi_{i} &  i=1,\ldots, y_{l}^{t}-1 \\
 \mathbf{w}.\mathbf{x}^{t} - \theta_{i} \leq -1 + \xi_{i} & i=y_{r}^{t}, \ldots, K-1
\end{cases}
\end{align}
The PA-II update equations are as follows. 
\begin{align*}
\mathbf{w}^{t+1} &= \mathbf{w}^{t} + a^{t}\mathbf{x}^{t} \\
\theta_i^{t+1} &=\theta_i^t -\lambda_i^t,\;\forall i \in S_l^t\\
\theta_i^{t+1}&=\theta_i^t +\mu_i^t,\;\forall i \in S_r^t
\end{align*}
where $\lambda_{i}^{t} = \frac{l_{i}^{t} - a^{t}\Vert\mathbf{x}^{t}\Vert^{2}}{1 + \frac{1}{2C}}$, $\mu_{i}^{t} =  \frac{ l_{i}^{t} + a^{t}\Vert\mathbf{x}_{t}\Vert^{2}}{1 + \frac{1}{2C}}$ and 
$a^{t} = \frac{\sum_{i \in S_{l}^{t}} l_{i}^{t}  -  \sum_{i \in S_{r}^{t}} l_{i}^{t}}{1 \ + \frac{1}{2C}  +  \lVert \mathbf{x}^{t} \rVert^{2} \{ \lvert S_{l}^{t} \rvert + \lvert S_{r}^{t} \rvert \}} $. The support sets $S_l^t$ and $S_r^t$ can be found in the similar way as in SCA.

\section{Correctness of PA Algorithms}

Now, we will show that our approach inherently maintains the ordering  of thresholds in each iteration.

\begin{theorem}(Order preservation of thresholds using PA algorithm)
Let $\theta_1^t\leq \ldots\leq \theta_{K-1}^t$ be the thresholds at trial $t$. Let $\theta_1^{t+1},\ldots,\theta_{K-1}^t$ be the updated thresholds using PA. Then, $\theta_1^{t+1}\leq \ldots \leq \theta_{K-1}^t$.
\end{theorem}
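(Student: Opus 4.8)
The plan is to exploit the explicit form of the PA update together with the contiguous structure of the support sets established in Lemma~\ref{thm1}. First I would record the following facts about the update. By complementary slackness (Eq.~(\ref{eq:PA-SC})), every updated left-support threshold collapses to the common value $\alpha := \ww^{t+1}\cdot\xx^t-1$, i.e. $\theta_i^{t+1}=\alpha$ for all $i\in S_l^t$, and every updated right-support threshold collapses to $\beta := \ww^{t+1}\cdot\xx^t+1$, i.e. $\theta_i^{t+1}=\beta$ for all $i\in S_r^t$; clearly $\alpha<\beta$. Since $\lambda_i^t>0$ on $S_l^t$ and $\mu_i^t>0$ on $S_r^t$, the updates $\theta_i^{t+1}=\theta_i^t-\lambda_i^t$ and $\theta_i^{t+1}=\theta_i^t+\mu_i^t$ show that left-support thresholds strictly decrease and right-support thresholds strictly increase, while every remaining threshold is left unchanged, $\theta_i^{t+1}=\theta_i^t$.

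Next I would use Lemma~\ref{thm1} and its right-sided analogue to write $S_l^t=\{m,\dots,y_l^t-1\}$ and $S_r^t=\{y_r^t,\dots,M\}$ as contiguous blocks, so that the index set splits into at most five consecutive regions: an untouched left block $\{1,\dots,m-1\}$, the block $S_l^t$ (all mapped to $\alpha$), an untouched middle block $\{y_l^t,\dots,y_r^t-1\}$, the block $S_r^t$ (all mapped to $\beta$), and an untouched right block $\{M+1,\dots,K-1\}$. It then suffices to verify $\theta_i^{t+1}\le\theta_{i+1}^{t+1}$ across each internal boundary. Within a single region the claim is immediate: on the untouched blocks it is the hypothesis $\theta_i^t\le\theta_{i+1}^t$, and on $S_l^t$ (resp. $S_r^t$) both values equal $\alpha$ (resp. $\beta$). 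The two ``easy'' boundaries are those where a support block meets the untouched middle: because left thresholds only decrease, $\theta_{y_l^t-1}^{t+1}=\alpha<\theta_{y_l^t-1}^t\le\theta_{y_l^t}^t=\theta_{y_l^t}^{t+1}$, and symmetrically $\theta_{y_r^t-1}^{t+1}=\theta_{y_r^t-1}^t\le\theta_{y_r^t}^t<\beta=\theta_{y_r^t}^{t+1}$; and if the middle block is empty ($y_l^t=y_r^t$) the adjacent support values are simply $\alpha<\beta$.

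The only substantive boundaries are where an untouched block abuts a support block at an \emph{excluded} index. Consider the left boundary pair $(m-1,m)$ with $m-1\notin S_l^t$. I would show $\theta_{m-1}^t\le\alpha$, equivalently $1+\theta_{m-1}^t-\ww^{t+1}\cdot\xx^t\le0$. Writing $\ww^{t+1}\cdot\xx^t=\ww^t\cdot\xx^t+a^t\lVert\xx^t\rVert^2$ and using $1+\theta_{m-1}^t-\ww^t\cdot\xx^t\le l_{m-1}^t$, the left-hand side is bounded above by $l_{m-1}^t-a^t\lVert\xx^t\rVert^2$, which is exactly the Lagrange multiplier $\lambda_{m-1}^t$ that Lemma~\ref{thm1} evaluates for the excluded index; since $m-1\notin S_l^t$ this quantity is $\le0$, giving $\theta_{m-1}^t\le\alpha=\theta_m^{t+1}$. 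The right boundary pair $(M,M+1)$ with $M+1\notin S_r^t$ is handled symmetrically, reducing $\beta\le\theta_{M+1}^t$ to $\mu_{M+1}^t=l_{M+1}^t+a^t\lVert\xx^t\rVert^2\le0$. Assembling all boundaries yields $\theta_1^{t+1}\le\cdots\le\theta_{K-1}^{t+1}$.

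I expect the main obstacle to be precisely these two substantive boundaries: the reduction of the threshold inequality to the sign of the excluded multiplier is clean only if that multiplier is computed with the \emph{final} support sets and final $a^t$, so I would need to make sure that SCA's stopping test (equivalently the inequality used in the proof of Lemma~\ref{thm1}) indeed certifies $\lambda_{m-1}^t\le0$ and $\mu_{M+1}^t\le0$ with respect to the sets produced at termination rather than at some intermediate step.
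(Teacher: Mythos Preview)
Your proposal is correct and follows essentially the same route as the paper: the paper also splits into cases according to whether consecutive indices lie in $S_l^t$, $S_r^t$, the middle block, or the excluded complement, uses Eq.~(\ref{eq:PA-SC}) to collapse each support block to a common value, and invokes Lemma~\ref{thm1} at the boundary between an excluded index and its adjacent support index. The only place to tighten is your identification of $l_{m-1}^t-a^t\lVert\xx^t\rVert^2$ with ``the Lagrange multiplier that Lemma~\ref{thm1} evaluates'': the SCA stopping test (and Lemma~\ref{thm1}) actually evaluates $l_{m-1}^t-\lVert\xx^t\rVert^2\bigl(l_{m-1}^t+\sum_{S_l^t}l_j^t-\sum_{S_r^t}l_j^t\bigr)/\bigl(1+\lVert\xx^t\rVert^2(|S_l^t|+1+|S_r^t|)\bigr)$, i.e.\ with the candidate index tentatively added, and the paper supplies the short algebraic step (its Eq.~(\ref{eq:bound})) showing that nonpositivity of this expression is equivalent to $l_{m-1}^t\le a^t\lVert\xx^t\rVert^2$ with the \emph{final} $a^t$ --- exactly the concern you flag at the end.
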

\begin{proof}
We need to analyse following different cases.
\begin{enumerate}
\item  We know that $\theta_k^{t+1}=\theta_k^t,\;k=y_l^t\ldots y_r^t-1$. Thus, $\theta_{y_l^t}^{t+1} \leq\ldots \leq \theta_{y_r^t-1}^{t+1}$.
\item $\forall k \in S_{l}^t $, we see that
\begin{align*}
\theta_k^{t+1} %&= \theta_k^t -l_k^t + \frac{\Vert \mathbf{x}^{t} \Vert^{2}(\sum_{i \in S_{l}^t} l_{i}^{t} - \sum_{i \in S_{r}^t} l_{i}^{t})}{1 + \Vert \mathbf{x}^{t} \Vert^{2} ( \vert S_{l}^t \vert + \vert S_{r}^t \vert ) } \\
&=-1+\ww.\xx + \frac{\Vert \mathbf{x}^{t} \Vert^{2}(\sum_{i \in S_{l}^t} l_{i}^{t} - \sum_{i \in S_{r}^t} l_{i}^{t})}{1 + \Vert \mathbf{x}^{t} \Vert^{2} ( \vert S_{l}^t \vert + \vert S_{r}^t \vert ) }
\end{align*}
Thus, all the thresholds in the set $S_l^t$ are mapped to the same value and hence the ordering is preserved.
\item $\forall k \in S_{r}^t $, we see that
\begin{align*}
\theta_k^{t+1} %&= \theta_k^t +l_k^t +  \frac{\Vert \mathbf{x}^{t} \Vert^{2}(\sum_{i \in S_{l}^t} l_{i}^{t} - \sum_{i \in S_{r}^t} l_{i}^{t})}{1 + \Vert \mathbf{x}^{t} \Vert^{2} ( \vert S_{l}^t \vert + \vert S_{r}^t \vert ) }\\
&=1+\ww.\xx + \frac{\Vert \mathbf{x}^{t} \Vert^{2}(\sum_{i \in S_{l}^t} l_{i}^{t} - \sum_{i \in S_{r}^t} l_{i}^{t})}{1 + \Vert \mathbf{x}^{t} \Vert^{2} ( \vert S_{l}^t \vert + \vert S_{r}^t \vert ) }
\end{align*}
All the thresholds in the set $S_r^t$ are mapped to the same value and hence the ordering is preserved.
\item Let $k,k+1 \in [y_l^t-1] \triangle S_{l}^t $ where $\triangle$ is symmetric difference between two sets. Then $\theta_{k+1}^{t+1}-\theta_k^{t+1} = \theta_{k+1}^t-\theta_k^t\geq 0$.
\item Let $k \in [y_l^t-1] \triangle S_{l}^t $ and $ k+1 \in S_{l}^t$. Then, using Theorem~\ref{thm1}, we get
\begin{align}
 \nonumber  l_{k}^{t} &\leq  \frac{\Vert \mathbf{x}^{t} \Vert^{2}(l_{k}^{t} + \sum_{i \in S_{l}^t} l_{i}^{t} - \sum_{i \in S_{r}^t} l_{i}^{t})}{1 + \Vert \mathbf{x}^{t} \Vert^{2} ( \vert S_{l}^t \vert + 1 + \vert S_{r}^t \vert ) }\\
 &\leq \frac{\Vert \mathbf{x}^{t} \Vert^{2} (\sum_{i \in S_{l}^t} l_{i}^{t} - \sum_{i \in S_{r}^t} l_{i}^{t})}{1 + \Vert \mathbf{x}^{t} \Vert^{2} ( \vert S_{l}^t \vert +\vert S_{r}^t \vert ) } =a^{t}\Vert \mathbf{x}^{t} \Vert^{2}
\label{eq:bound}
\end{align}
Then, using (\ref{eq:bound}), $\theta_{k+1}^{t+1}-\theta_k^{t+1} = \theta_{k+1}^t - l_{k+1}^t + a^{t}\Vert \mathbf{x}^{t} \Vert^{2}-\theta^t_k = \theta_{k+1}^t -(l_k^t - \theta_k^t + \theta_{k+1}^t)+ a^{t}\Vert \mathbf{x}^{t} \Vert^{2}-\theta^t_k = -l_k^t +a^{t}\Vert \mathbf{x}^{t} \Vert^{2} \geq 0$.
 \item Let $k,k+1 \in \{y_r^t,\ldots,K-1\} \triangle S_r^t $, then $\theta_{k+1}^{t+1}-\theta_k^{t+1} = \theta_{k+1}^t-\theta_k^t\geq 0$.  
 \item Let $k+1 \in \{y_r^t,\ldots,K-1\} \triangle S_r^t $ and $ k \in S_{r}^t$. Then, 
\begin{align}
 \nonumber l_{k+1}^{t} &\leq - \frac{\Vert \mathbf{x}^{t} \Vert^{2}(\sum_{i \in S_{l}^t} l_{i}^{t} - \sum_{i \in S_{r}^t} l_{i}^{t}-l_{k+1}^t)}{1 + \Vert \mathbf{x}^{t} \Vert^{2} ( \vert S_{l}^t \vert + 1 + \vert S_{r}^t \vert ) }\\
 &\leq -\frac{\Vert \mathbf{x}^{t} \Vert^{2} (\sum_{i \in S_{l}^t} l_{i}^{t} - \sum_{i \in S_{r}^t} l_{i}^{t})}{1 + \Vert \mathbf{x}^{t} \Vert^{2} ( \vert S_{l}^t \vert +\vert S_{r}^t \vert ) } =-a^{t}\Vert \mathbf{x}^{t} \Vert^{2}
\label{eq:bound1}
\end{align}
Then, using (\ref{eq:bound1}), $\theta_{k+1}^{t+1}-\theta_k^{t+1} = \theta_{k+1}^t - \theta^t_k - l_{k}^t - a^{t}\Vert \mathbf{x}^{t} \Vert^{2} = \theta_{k+1}^t -(l_{k+1}^t - \theta_k^t + \theta_{k+1}^t)- a^{2}\Vert \mathbf{x}^{t} \Vert^{2}-\theta^t_k = -l_{k+1}^t -a^{t}\Vert \mathbf{x}^{t} \Vert^{2} \geq 0$. 
\end{enumerate} 
This completes the proof.
\end{proof}

\begin{theorem}(Order preservation of thresholds using PA-I)
Let $\theta_1^t\leq \ldots\leq \theta_{K-1}^t$ be the thresholds at trial $t$. Let $\theta_1^{t+1},\ldots,\theta_{K-1}^{t+1}$ be the updated thresholds using PA-I. Then, $\theta_1^{t+1}\leq \ldots \leq \theta_{K-1}^t$.
\end{theorem}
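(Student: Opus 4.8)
The plan is to follow the same case decomposition used in the order-preservation proof for plain PA, checking that every consecutive pair satisfies $\theta_k^{t+1} \leq \theta_{k+1}^{t+1}$ after the update, and isolating the single new feature of PA-I: the cap $\min(C,\cdot)$ applied to each $\lambda_i$ and $\mu_i$. First I would split the support thresholds into \emph{uncapped} ones, where $\lambda_i = l_i^t - a^t\Vert\xx^t\Vert^2$ (resp. $\mu_i = l_i^t + a^t\Vert\xx^t\Vert^2$), and \emph{capped} ones, where $\lambda_i = C$ (resp. $\mu_i = C$). Substituting the uncapped value and using $\ww^{t+1}\cdot\xx^t = \ww^t\cdot\xx^t + a^t\Vert\xx^t\Vert^2$ gives $\theta_k^{t+1} = -1 + \ww^{t+1}\cdot\xx^t$ for every uncapped $k \in S_l^t$ and $\theta_k^{t+1} = 1 + \ww^{t+1}\cdot\xx^t$ for every uncapped $k \in S_r^t$ — exactly the common values that appear in the PA theorem. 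Hence all pairs involving only uncapped support thresholds, middle thresholds (which satisfy $\theta_k^{t+1}=\theta_k^t$), and non-support thresholds are handled verbatim by cases 1--7 of the PA proof, including the boundary inequalities obtained from Lemma~\ref{thm1}.

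The remaining work concerns capped thresholds. A capped threshold is translated rigidly, $\theta_k^{t+1} = \theta_k^t - C$ on the left and $\theta_k^{t+1} = \theta_k^t + C$ on the right, so any pair of two capped left (or two capped right) thresholds keeps its original nonnegative gap, and a pair of capped thresholds straddling the central band is immediate. The key structural observation is that on the left region the losses $l_i^t = 1 + \theta_i^t - \ww^t\cdot\xx^t$ are nondecreasing in $i$, so the capping condition $l_i^t - a^t\Vert\xx^t\Vert^2 \geq C$ holds for a suffix of indices (those closest to the central band); thus within $S_l^t$ the uncapped indices precede the capped ones. At the interface I must compare a capped $\theta_{k+1}^{t+1} = \theta_{k+1}^t - C$ with an uncapped $\theta_k^{t+1} = -1 + \ww^{t+1}\cdot\xx^t$; rearranging the capping inequality $l_{k+1}^t - a^t\Vert\xx^t\Vert^2 \geq C$ into $\theta_{k+1}^t - C \geq -1 + \ww^{t+1}\cdot\xx^t$ yields exactly $\theta_{k+1}^{t+1} \geq \theta_k^{t+1}$. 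The right region is handled by the mirror computation, where $l_i^t$ is nonincreasing, the capped indices form a prefix, and the $+C$/$+1$ signs give the analogous rearrangement.

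The main obstacle I anticipate is twofold. First, the interface between a capped support threshold and an adjacent \emph{passive} (non-support) threshold: when the entire support suffix is capped I cannot route through the common uncapped value and must instead show directly that $\theta_j^t - C \geq \theta_{j-1}^t$ for the smallest support index $j$, combining the capping inequality for $j$ with the non-activity inequality $l_{j-1}^t \leq a^t\Vert\xx^t\Vert^2$ inherited from the definition of $S_l^t$; the subcase $l_{j-1}^t=0$ (where the loss is clamped at zero) needs separate attention and is the most fragile point. Second, and more delicate, is justifying that SCA-I actually returns support sets with the same contiguous structure that Lemma~\ref{thm1} guarantees for SCA: because SCA-I determines $S_l^t, S_r^t$ by iterating the capped multipliers to convergence rather than by the single monotone sweep of SCA, I would first need to argue that at convergence $S_l^t$ is still a suffix and $S_r^t$ a prefix of their respective regions, and that the monotone ordering of $l_i^t$ on them is retained. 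Once that structural fact is secured, every order-preservation inequality reduces to the algebraic rearrangements sketched above.
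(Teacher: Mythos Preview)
Your approach matches the paper's: it too defers to the PA case analysis and isolates the cap $\min(C,\cdot)$ as the only new ingredient, handling (in its Case~1) the capped-support/passive interface you flag as your first obstacle by exactly the rearrangement $\theta_{k+1}^t - C \geq \theta_{k+1}^t - l_{k+1}^t + a^t\Vert\mathbf{x}^t\Vert^2$, and (in its Case~2) the capped/uncapped interface inside $S_l^t$ by the same inequality you sketch. Your second obstacle dissolves without any separate structural lemma for SCA-I: at the converged $a^t$, contiguity of $S_l^t$ is automatic from monotonicity of the losses, since $l_k^t \leq l_{k+1}^t$ and $\min(C, l_k^t - a^t\Vert\mathbf{x}^t\Vert^2) > 0$ force $\min(C, l_{k+1}^t - a^t\Vert\mathbf{x}^t\Vert^2) > 0$ as well.
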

\begin{proof}
The proof follows in the same manner as PA algorithm. We only consider here following two cases.
\begin{enumerate}
\item $k+1 \in S_l^t$ and $k \in [y_l^t-1]\triangle S_l^t$.  Thus, $\lambda_{k}^{t} < 0$. Which means, $l_{k}^{t} - a^{t}\Vert \mathbf{x}_{t} \Vert^{2} < 0$ as $C>0$. Also, $\lambda_{k+1}^{t} = \min(C, l_{k+1}^{t} - a^{t}\Vert\mathbf{x}^{t}\Vert^{2})  > 0$. When $ \lambda_{k+1}^{t} = l_{k+1}^{t} - a^{t}\vert \mathbf{x}^{t}\Vert^{2} $, we see that \begin{align*}
\theta_{k+1}^{t+1}&-\theta_k^{t+1} =\theta_{k+1}^t-l_{k+1}^{t} + a^{t}\vert \mathbf{x}^{t}\Vert^{2}-\theta_k^t\\
&=\theta_{k+1}-(l_k^t-\theta_k^t+\theta_{k+1}^t)+a^{t}\vert \mathbf{x}^{t}\Vert^{2} -\theta_k^t\\
&=-l_k^t + a^{t}\vert \mathbf{x}^{t}\Vert^{2}\geq 0
\end{align*}
When $\lambda_{k+1}^{t} = C $ ($C\leq l_{k+1}^t-a^t\Vert \mathbf{x}_{t}\Vert^{2}$), we have $\theta_{k+1}^{t+1}-\theta_k^{t+1} = \theta_{k+1}^t-C-\theta_k^t \geq \theta_{k+1}^t-l_{k+1}^{t} + a^{t}\Vert \mathbf{x}^{t}\Vert^{2}-\theta_k^t\geq 0$.
\item Let $ k,k+1\in S_{l}^t $. Thus, $\theta_{k+1}^{t+1}-\theta_k^{t+1} = \theta_{k+1}^t-\theta_k^t - \lambda_{k+1}^t + \lambda_k^t$. There can be four different cases as below.
\begin{enumerate}
\item When $\lambda_{k+1}^t=\lambda_k^t=C$. Thus, $\theta_{k+1}^{t+1}-\theta_k^{t+1}=\theta_{k+1}^{t}-\theta_k^{t}\geq 0$. Similar, is the case when $\lambda_k^t=C$, then $\lambda_{k+1}^t= C$ due to the fact that $l_{k+1}^{t} \geq l_{k}^{t}$.
\item Let $\lambda_k^t=l_{k}^{t} - a^{t}\Vert \mathbf{x}^{t} \Vert^{2}$ and $\lambda_{k+1}^t= l_{k+1}^{t} - a^{t}\Vert \mathbf{x}^{t} \Vert^{2}$. Thus, $\theta_{k+1}^{t+1}=\theta_k^{t+1} = -1+\ww^t.\xx^t + a^t\Vert \mathbf{x}_{t} \Vert^{2}$. 
\item Let $ \lambda_k^t=l_{k}^{t} - a^{t}\Vert \mathbf{x}^{t} \Vert^{2}$ and $\lambda_{k+1}^t=C$. We see that $\theta_k^{t+1} = -1+\ww^t.\xx^t + a^t\Vert \mathbf{x}^{t} \Vert^{2}$ and $\theta_{k+1}^{t+1}=\theta_{k+1}^{t}-C \geq \theta_{k+1}^t-l_{k+1}^t + a^t\Vert \mathbf{x}^{t} \Vert^{2} = -1+\ww^t.\xx^t + a^t\Vert \mathbf{x}^{t} \Vert^{2}$. Thus, $\theta_{k+1}^{t+1}-\theta_k^{t+1} \geq 0$.
\end{enumerate}
 \end{enumerate}
Similar arguments can be given for the right support class $S_{r}^{t}$ and hence, we skip the proof for it.
\end{proof}

\begin{theorem}(Order preservation of thresholds using PA-II) Let $\theta_1^t\leq \ldots\leq \theta_{K-1}^t$ be the thresholds at trial $t$. Let $\theta_1^{t+1},\ldots,\theta_{K-1}^t$ be the updated thresholds using PA-II. Then, $\theta_1^{t+1}\leq \ldots \leq \theta_{K-1}^t$.
\end{theorem}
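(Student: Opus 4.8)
The plan is to follow the template of the PA order-preservation proof, reducing the statement to verifying $\theta_k^{t+1}\le\theta_{k+1}^{t+1}$ for every consecutive pair $(k,k+1)$ and organizing the pairs according to whether $k$ and $k+1$ lie in the left support set $S_l^t$, the right support set $S_r^t$, or the untouched middle block $\{y_l^t,\ldots,y_r^t-1\}$. Throughout I would abbreviate $\gamma=\frac{1}{1+\frac{1}{2C}}$, noting $0<\gamma<1$, so that the PA-II multipliers read $\lambda_i^t=\gamma(l_i^t-a^t\Vert\xx^t\Vert^2)$ and $\mu_i^t=\gamma(l_i^t+a^t\Vert\xx^t\Vert^2)$. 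I would also record the two elementary identities coming from the definitions of the losses: for left indices $l_{k+1}^t-l_k^t=\theta_{k+1}^t-\theta_k^t$, and for right indices $l_k^t-l_{k+1}^t=\theta_{k+1}^t-\theta_k^t$; together with the monotonicities $l_k^t\le l_{k+1}^t$ on the left and $l_k^t\ge l_{k+1}^t$ on the right (the losses shrink as one moves away from the true label range).

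The essential point to flag at the outset is that the PA-II case differs structurally from PA: because $\gamma<1$, substituting the update into the active-constraint form gives, for $k\in S_l^t$, $\theta_k^{t+1}=(1-\gamma)l_k^t-1+\ww^t\cdot\xx^t+\gamma a^t\Vert\xx^t\Vert^2$, which still depends on $k$ through the residual term $(1-\gamma)l_k^t$. Hence the thresholds in a support set no longer collapse to a single value as they did in PA. For a consecutive pair $k,k+1\in S_l^t$ I would therefore compute $\theta_{k+1}^{t+1}-\theta_k^{t+1}=(1-\gamma)(l_{k+1}^t-l_k^t)\ge0$ using left-monotonicity, and symmetrically for $k,k+1\in S_r^t$ I would get $\theta_{k+1}^{t+1}-\theta_k^{t+1}=(1-\gamma)(l_k^t-l_{k+1}^t)\ge0$ using right-monotonicity. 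Pairs with both indices untouched keep their trial-$t$ values, so their order is inherited directly.

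The two remaining crossing cases are where I would re-use the support-set structure of Lemma~\ref{thm1}. For $k\notin S_l^t$ with $k+1\in S_l^t$, the failure of the inclusion test means $l_k^t\le a^t\Vert\xx^t\Vert^2$; substituting the update for $k+1$ and the left identity reduces $\theta_{k+1}^{t+1}-\theta_k^{t+1}$ to $(1-\gamma)l_{k+1}^t-l_k^t+\gamma a^t\Vert\xx^t\Vert^2$, and bounding $\gamma a^t\Vert\xx^t\Vert^2\ge\gamma l_k^t$ closes it to $(1-\gamma)(l_{k+1}^t-l_k^t)\ge0$. Symmetrically, for $k\in S_r^t$ with $k+1\notin S_r^t$, exclusion gives $l_{k+1}^t\le-a^t\Vert\xx^t\Vert^2$, and the same manipulation yields $(1-\gamma)(l_k^t-l_{k+1}^t)\ge0$. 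Finally, the boundary pairs joining a support set to the middle block are immediate, since a left update can only decrease a threshold ($\lambda_k^t>0$) while a right update can only increase it ($\mu_k^t>0$), so the decreased left endpoint $y_l^t-1$ stays below the unchanged $\theta_{y_l^t}^t$ and the increased right endpoint $y_r^t$ stays above the unchanged $\theta_{y_r^t-1}^t$.

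I expect the main obstacle to be justification rather than computation: Lemma~\ref{thm1} and its right-hand analogue were established for the PA multipliers, so I must first confirm that the PA-II support sets inherit the same contiguous structure and the same sign characterization ($k\notin S_l^t\Rightarrow l_k^t\le a^t\Vert\xx^t\Vert^2$). The only change from PA is the extra additive constant $\frac{1}{2C}$ in the denominators of $a^t,\lambda_i^t,\mu_i^t$; since $\gamma>0$ this does not alter the sign of any multiplier, so the monotonicity argument of Lemma~\ref{thm1} carries over verbatim and I would state this as a one-line reduction. The genuinely new work, absent in the PA proof, is simply keeping track of the $(1-\gamma)$ residual terms and checking that each inequality still closes, which it does because $0<\gamma<1$ and the losses are monotone on each side.
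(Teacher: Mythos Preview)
Your proposal is correct and follows exactly the route the paper indicates: its entire proof of this theorem is the single sentence ``The order preservation proof for PA-II works in the similar way as PA algorithm,'' so your case analysis mirroring Theorem~1 is precisely what is intended. You have also correctly isolated the one place where the argument is not verbatim---the factor $\gamma=(1+\tfrac{1}{2C})^{-1}<1$ leaves a residual $(1-\gamma)(l_{k+1}^t-l_k^t)$ rather than collapsing all support-set thresholds to a common value---and your transfer of Lemma~\ref{thm1} to PA-II (only a positive additive constant enters the denominator, so the sign of each multiplier is unchanged) is sound.
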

The order preservation proof for PA-II works in the similar way as PA algorithm. 

\section{Mistake Bound Analysis}
We find the mistake bounds for the proposed PA algorithms under both general and ideal cases. 
In the ideal case, there exists a ranking function such that for every example, the predicted label lies in the label interval with certain margin guarantees. Thus, for every example, the loss incurred using it would be zero. In the general case, there does not exists an ideal classifier. Let $l_{i}^{t} $ be the loss due to $i^{th}$ threshold in trial $t$. Let $ l_{i}^{t*} $ denote the loss suffered due to $i^{th}$ threshold by the fixed predictor at trial $t$. We define $\Delta_t$ as follows.
\begin{align*}
\Delta_t &= \lVert \mathbf{w}^{t}  -  \mathbf{u} \rVert^{2}  -  \lVert \mathbf{w}^{t+1}  -  \mathbf{u} \rVert^{2}  + \Vert\thetaa^{t}  -  {\bf b}\Vert^2 - \Vert\thetaa^{t+1}  -  {\bf b}\Vert^{2} 
\end{align*}
Using the fact that $\ww^0 =\mathbf{0}$ and $\thetaa^0=\mathbf{0}$, we get
\begin{align}
\nonumber \sum_{i=1}^{T} \Delta_t &= \Vert \mathbf{w}^{0} - \mathbf{u} \Vert^{2} - \Vert \mathbf{w}^{T+1} - \mathbf{u} \Vert^{2} +  \Vert \thetaa^{0} - {\bf b}\Vert^{2}\\
&- \Vert\thetaa^{T+1} - {\bf b}\Vert^{2} \leq \Vert \mathbf{u} \Vert^{2} + \Vert \mathbf{b} \Vert^2
\label{eq:ub}
\end{align}
This gives an upper bound on the sum of $\Delta_t$. We see that $\theta_i^{t+1}=\theta_i^t,\;\forall i \notin S_l^t\cup S_r^t$. Thus,
\begin{align}
\nonumber &\Delta_t %= \lVert \mathbf{w}^{t}  -  \mathbf{u} \rVert^{2}  -  \lVert \mathbf{w}^{t+1}  -  \mathbf{u} \rVert^{2}  + \sum_{i \in S_{l}^{t}\cup S_r^t} \left[(\theta_{i}^{t}  -  b_{i})^{2} - (\theta_{i}^{t+1}  -  b_{i})^{2}\right] \\
%\nonumber  &
= -(a^t)^{2}\Vert \mathbf{x}^{t} \Vert^{2} - 2a^t\mathbf{x}^{t}.(\mathbf{w}^{t} - \mathbf{u}) - \sum_{i \in S_{l}^{t}}(\lambda_i^t)^{2} - \sum_{i \in S_{r}^{t}}( \mu_i^t)^{2}\\
&\;\;\;\;+ \sum_{i \in S_{l}^{t}} 2\lambda_i^t(\theta_i^t -b_{i}) - \sum_{i \in S_{r}^{t}} 2\mu_i^t(\theta_i^t - b_{i}) \nonumber
\end{align}
Note that $\theta_i^t = \ww^t.\xx^t+l_i^t-1,\;\forall i \in S_l^t$ and $\theta_i^t=1+\ww^t.\xx^t-l_i^t,\;\forall i \in S_r^t$.
Also, note that
$-b_{i}\geq 1 - \mathbf{u}.\mathbf{x}^{t} -l_{i}^{t*}, \forall i \in S_{l}^{t}$ and $b_{i}\geq 1 + \mathbf{u}.\mathbf{x}^{t} - l_{i}^{t*},\forall i \in S_{r}^{t}$. Thus,
\begin{align}
\nonumber \Delta_t &\geq -(a^t)^{2}\Vert \mathbf{x}^{t} \Vert^{2} - \sum_{i \in S_{l}^{t}}(\lambda_i^t)^{2} - \sum_{i \in S_{r}^{t}}( \mu_i^t)^{2} + \sum_{i \in S_{l}^{t}} 2\lambda_i^t(l_{i}^{t} -l_{i}^{t*}) \\
& + \sum_{i \in S_{r}^{t}} 2\mu_i^t(l_{i}^{t} - l_i^{t*}) \label{eq:delta-t}
\end{align}

\iffalse
 Before we go ahead, we prove a useful Lemma as follows.
\begin{lemma}
\label{lemma1}
\begin{align*}
(\sum_{i\in S_l^t}l_i^t - \sum_{i\in S_r^t}l_i^t)^2 \leq (\vert S_l^t \vert + \vert S_r^t \vert) \sum_{i\in S_l^t \cup S_r^t} (l_i^t)^2
\end{align*}
\end{lemma}
\begin{proof}
\begin{align*}
(\sum_{i\in S_l^t}l_i^t - \sum_{i\in S_r^t}l_i^t)^2 \leq (\sum_{i\in S_l^t}l_i^t +\sum_{i\in S_r^t}l_i^t)^2 =(L_l^t.\mathbf{1})^2
\end{align*}
where $\mathbf{1}$ is  a vector of size $|S_l^t|+|S_r^t|$ whose all elements are 1 and $L^t$ is also a vector of size $|S_l^t|+|S_r^t|$ whose elements are $l_i^t,\;i\in S_l^t\cup S_r^t$. Thus, using Cauchi-Schwarz inequality
\begin{align*}
(\sum_{i\in S_l^t}l_i^t - \sum_{i\in S_r^t}l_i^t)^2  &\leq \Vert {\bf 1}\Vert^2 \Vert L_t\Vert^2 = (\vert S_l^t \vert + \vert S_r^t \vert) \sum_{i\in S_l^t \cup S_r^t} (l_i^t)^2
\end{align*}
\end{proof}
\fi

Now, we find the mistake bound of the PA algorithm described in Algorithm~\ref{SPOR} in general case.
\begin{theorem}(Mistake Bound of PA in General Case)
\label{thm:regPA}
Let $(\mathbf{x}^{1}, y_{l}^{1}, y_{r}^{1}), \cdots,(\mathbf{x}^{T}, y_{l}^{T}, y_{r}^{T})$ be the sequence of examples.% presented to PA algorithm.
 Let $c=\min_{t\in[T]}(y_r^t-y_l^t)$ and $R^2=\max_{t\in[T]}\Vert \xx^t\Vert^2$. Let $ \mathbf{v}=[\mathbf{u}'\;\;\mathbf{b}']'$ be the parameters of an arbitrary predictor ($\mathbf{u}\in \R^d $ and $ \mathbf{b} \in \R^{K-1}$). Then, the mistake bound of PA algorithm is given as
\begin{align*}
\sum_{t=1}^T\sum_{i=1}^{K-1}(l_i^t)^2 \leq D^2\left(\Vert\mathbf{v}\Vert+ 4(K-c-1) \sqrt{\sum_{t=1}^T\sum_{i=1}^{K-1}(l_i^{t*})^2}\right)^2
\end{align*}
where $D=\left(1+R^2(K-c-1)\right)$ and $R^2=\max_{t\in[T]}\Vert \xx^t \Vert^2$.
\end{theorem}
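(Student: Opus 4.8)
The plan is to telescope the per-trial progress quantity $\Delta_t$ and combine the lower bound from Eq.~(\ref{eq:delta-t}) with the global upper bound from Eq.~(\ref{eq:ub}). Summing Eq.~(\ref{eq:delta-t}) over $t$ gives $\sum_t \Delta_t \geq \sum_t \big[ -(a^t)^2\Vert\xx^t\Vert^2 - \sum_{i\in S_l^t}(\lambda_i^t)^2 - \sum_{i\in S_r^t}(\mu_i^t)^2 + 2\sum_{i\in S_l^t}\lambda_i^t(l_i^t - l_i^{t*}) + 2\sum_{i\in S_r^t}\mu_i^t(l_i^t-l_i^{t*})\big]$, and this is bounded above by $\Vert\mathbf{u}\Vert^2+\Vert\mathbf{b}\Vert^2 = \Vert\mathbf{v}\Vert^2$. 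The goal is to turn the left side into a lower bound in terms of $\sum_t\sum_i (l_i^t)^2$ and the competitor's cumulative loss $\sum_t\sum_i (l_i^{t*})^2$, then solve the resulting quadratic inequality in the unknown $M := \sqrt{\sum_t\sum_i (l_i^t)^2}$.

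\textbf{Key algebraic steps.} First I would control the ``self-interaction'' terms. Since $\lambda_i^t = l_i^t - a^t\Vert\xx^t\Vert^2$ on $S_l^t$ and $\mu_i^t = l_i^t + a^t\Vert\xx^t\Vert^2$ on $S_r^t$, the combination $-(a^t)^2\Vert\xx^t\Vert^2 - \sum(\lambda_i^t)^2 - \sum(\mu_i^t)^2 + 2\sum\lambda_i^t l_i^t + 2\sum\mu_i^t l_i^t$ should simplify: expanding $2\lambda_i^t l_i^t - (\lambda_i^t)^2 = (l_i^t)^2 - (l_i^t-\lambda_i^t)^2$ and using the closed form for $a^t = \frac{\sum_{S_l}l_i^t - \sum_{S_r}l_i^t}{1+(\vert S_l^t\vert+\vert S_r^t\vert)\Vert\xx^t\Vert^2}$, I expect these to collapse into $\sum_{i\in S_l^t\cup S_r^t}(l_i^t)^2$ up to a nonnegative correction, yielding a clean lower bound of the form $\sum_{i\in S_l^t\cup S_r^t}(l_i^t)^2$ (this is exactly where the commented-out Cauchy--Schwarz Lemma would be used, to bound $(a^t)^2\Vert\xx^t\Vert^2$-type cross terms by $(\vert S_l^t\vert+\vert S_r^t\vert)\sum(l_i^t)^2$). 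Because $\vert S_l^t\vert+\vert S_r^t\vert \leq K-1-(y_r^t-y_l^t) \leq K-c-1$, the support-set cardinalities are uniformly bounded by $K-c-1$, which is where the constant $K-c-1$ enters $D$ and the final bound. For the cross terms $-2\sum\lambda_i^t l_i^{t*}$ and $-2\sum\mu_i^t l_i^{t*}$, I would apply Cauchy--Schwarz across trials to get something like $\leq 4(K-c-1)\,M\,\sqrt{\sum_t\sum_i (l_i^{t*})^2}$, picking up the factor $4(K-c-1)$ that appears in the statement.

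\textbf{Closing the argument.} Assembling these pieces produces an inequality of the shape $\tfrac{1}{D}M^2 \leq \Vert\mathbf{v}\Vert^2 + 4(K-c-1)\,M\,\sqrt{\sum_t\sum_i(l_i^{t*})^2}$ (with the loss-to-Lagrange-multiplier conversion contributing the denominator $D = 1+R^2(K-c-1)$, since $\lambda_i^t,\mu_i^t$ relate to $l_i^t$ through a factor involving $1+(\vert S_l^t\vert+\vert S_r^t\vert)\Vert\xx^t\Vert^2 \leq D$). This is a quadratic inequality $M^2 \leq D\big(\Vert\mathbf{v}\Vert^2 + 4(K-c-1)M\sqrt{\sum(l_i^{t*})^2}\big)$ in $M$; solving it via the standard ``if $x^2 \leq bx + c$ then $x \leq b + \sqrt{c}$'' trick (or by completing the square) and then squaring gives $M^2 \leq D^2\big(\Vert\mathbf{v}\Vert + 4(K-c-1)\sqrt{\sum_t\sum_i(l_i^{t*})^2}\big)^2$, which is the claimed bound.

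\textbf{The hard part} will be the exact bookkeeping in the first algebraic step: verifying that the self-interaction terms telescope cleanly to $\sum_{i\in S_l^t\cup S_r^t}(l_i^t)^2$ and tracking precisely how the factor $D = 1+R^2(K-c-1)$ and the multiplier $4(K-c-1)$ arise from bounding $\vert S_l^t\vert+\vert S_r^t\vert$ and from the Cauchy--Schwarz applications. In particular, reconciling the denominator structure of $a^t$, $\lambda_i^t$, $\mu_i^t$ with the single clean constant $D$ requires care, since the naive per-trial denominators differ from $D$; the uniform bounds $\vert S_l^t\vert+\vert S_r^t\vert \leq K-c-1$ and $\Vert\xx^t\Vert^2\leq R^2$ are what let one replace them all by $D$.
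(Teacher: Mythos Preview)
Your proposal is correct and follows essentially the same route as the paper's proof: substitute the PA closed forms into Eq.~(\ref{eq:delta-t}), use the Cauchy--Schwarz-type bound $(\sum_{S_l^t}l_i^t-\sum_{S_r^t}l_i^t)^2\le(|S_l^t|+|S_r^t|)\sum(l_i^t)^2$ and the cardinality bound $|S_l^t|+|S_r^t|\le K-c-1$ to extract the factor $1/D$ in front of $\sum_i(l_i^t)^2$, bound the cross terms to produce the $4(K-c-1)$ coefficient, apply Cauchy--Schwarz across $t$, and solve the resulting quadratic in $M=L_T$. One small clarification: the self-interaction terms do not collapse to $\sum_{i\in S_l^t\cup S_r^t}(l_i^t)^2$ itself but to $\sum_{i\in S_l^t\cup S_r^t}(l_i^t)^2/(1+\Vert\xx^t\Vert^2(|S_l^t|+|S_r^t|))$, which is then lower-bounded by $\sum_i(l_i^t)^2/D$; your later remarks about the denominator make clear you understand this, so it is only a wording issue in the plan.
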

\begin{proof}
Using PA updates and Eq.~(\ref{eq:delta-t}), we get
\begin{align*}
 &\Delta_t %=  -(a^t)^{2}\Vert \mathbf{x}^{t} \Vert^{2} - \sum_{i \in S_{l}^{t}}(l_{i}^{t} -a^t\Vert\mathbf{x}^{t}\Vert^{2})^{2}\\
 %&\;\;\;\;- \sum_{i \in S_{r}^{t}}( l_{i}^{t} + a^{t}\Vert\mathbf{x}^{t}\Vert^{2})^{2}  + \sum_{i \in S_{l}^{t}} 2(l_{i}^{t}-a^{t}\Vert\mathbf{x}^{t}\Vert^{2})(l_{i}^{t} -l^{t*}_{i}) \\
%&\;\;\;\;+\sum_{i \in S_{r}^{t}} 2( l_{i}^{t}+a^{t}\Vert\mathbf{x}^{t}\Vert^{2})(l_{i}^{t} - l^{t*}_{i}) \nonumber \\
 = -a_t^{2}\Vert\mathbf{x}^{t}\Vert^{2}\left[1+ \Vert\mathbf{x}^{t}\Vert^{2}(\vert S_l^t \vert + \vert S_r^t \vert)\right]+ \sum_{i \in S_{l}^{t}\cup S_r^t} (l_{i}^{t})^{2} \\
&\;\;\;\;+ \sum_{i \in S_{l}^{t}} 2(a^t\Vert\mathbf{x}^{t}\Vert^{2} - l_{i}^{t}) l_{i}^{t*}   -\sum_{i \in S_{r}^{t}} 2(l_{i}^{t} +a^{t}\Vert\mathbf{x}^{t}\Vert^{2}) l_{i}^{t*}\\
&\geq \frac{-(\sum_{i \in S_{l}^{t}} l_{i}^{t}  -  \sum_{i \in S_{r}^{t}} l_{i}^{t})^{2} \Vert \mathbf{x}^{t} \Vert^{2}}{1  + \Vert \mathbf{x}^{t} \Vert^{2} \{ \vert S_{l}^{t} \vert + \vert S_{r}^{t} \vert \}} + \sum_{i \in S_{l}^{t}\cup S_{r}^{t}} l_i^t[l_{i}^{t} -2l_i^{t*}]  \\
&\;\;\;\;-\frac{2\Vert \xx^t\Vert^2\left(\sum_{i\in S_l^t\cup S_r^t}l_i^t\sum_{j\in S_l^t\cup S_r^t}l_j^{t*}\right)}{1  + \Vert \mathbf{x}^{t} \Vert^{2} \{ \vert S_{l}^{t} \vert + \vert S_{r}^{t} \vert \}}\\
& \geq   - \frac{2(1  + \Vert \mathbf{x}^{t} \Vert^{2} \{ \vert S_{l}^{t} \vert + \vert S_{r}^{t} \vert +1\})\sum_{i\in S_l^t\cup S_r^t}l_i^t\sum_{j\in S_l^t\cup S_r^t}l_j^{t*}}{1  + \Vert \mathbf{x}^{t} \Vert^{2} \{ \vert S_{l}^{t} \vert + \vert S_{r}^{t} \vert \}}\\
&\;\;\;\;+\frac{\sum_{i\in S_l^t\cup S_r^t}(l_i^t)^2 }{1  + R^{2} (K-c-1)}\\
& \geq  \frac{\sum_{i\in S_l^t\cup S_r^t}(l_i^t)^2 }{D}  - 4\sum_{i\in S_l^t\cup S_r^t}l_i^t\sum_{j\in S_l^t\cup S_r^t}l_j^{t*}
\end{align*}
where $D=1  + R^{2} (K-c-1)$. We have used the fact that $\sum_{i\in S_l^t\cup S_r^t}l_i^tl_i^{t*}\leq \left(\sum_{i\in S_l^t\cup S_r^t}l_i^t\sum_{j\in S_l^t\cup S_r^t}l_j^{t*}\right)$ and $(\sum_{i\in S_l^t}l_i^t - \sum_{i\in S_r^t}l_i^t)^2 \leq (\vert S_l^t \vert + \vert S_r^t \vert) \sum_{i\in S_l^t \cup S_r^t} (l_i^t)^2$. Now, using $l_i^t=0\;\forall i\notin S_l^t\cup S_r^t$ and $\sum_{i\in S_l^t\cup S_r^t}l_i^t\leq (|S_l^t|+|S_r^t|)\sqrt{\sum_{i\in S_l^t\cup S_r^t}(l_i^t)^2}$, we get
\begin{align*}
\Delta_t  &%\geq  \frac{\sum_{i=1}^{K-1}(l_i^t)^2 }{D}  -4\sum_{i\in S_l^t\cup S_r^t}l_i^t\sum_{j\in S_l^t\cup S_r^t}l_i^{t*}\\
\geq \frac{\sum_{i=1}^{K-1}(l_i^t)^2 }{D}-4(|S_l^t|+|S_r^t|)\sqrt{\sum_{i\in S_l^t\cup S_r^t}(l_i^t)^2}\sqrt{\sum_{i\in S_l^t\cup S_r^t}(l_i^{t*})^2}\\
&\geq \frac{\sum_{i=1}^{K-1}(l_i^t)^2 }{D}-4(K-c-1)\sqrt{\sum_{i=1}^{K-1}(l_i^t)^2}\sqrt{\sum_{i=1}^{K-1}(l_i^{t*})^2}
\end{align*} 
Comparing the upper and lower bounds on $\sum_{t=1}^T\Delta_t$, we get
\begin{align*}
\sum_{t=1}^{T} \sum_{i =1}^{K-1} (l_{i}^{t})^{2}  &\leq \left(\Vert\mathbf{v}\Vert^{2}  + 4 K_1\sum_{t=1}^T\sqrt{\sum_{i=1}^{K-1}(l_i^t)^2}\sqrt{\sum_{i=1}^{K-1}(l_i^{t*})^2}\right)D 
\end{align*}
where $K_1=K-c-1$. Using Cauchy-Schwarz inequality, we get $\sum_{t=1}^T\sqrt{\sum_{i=1}^{K-1}(l_i^t)^2}\sqrt{\sum_{i=1}^{K-1}(l_i^{t*})^2}\leq L_T U_T$
where $L_T = \sqrt{\sum_{t=1}^{T} \sum_{i =1}^{K-1} (l_{i}^{t})^{2}}$ and $U_T =\sqrt{\sum_{t=1}^{T} \sum_{i =1}^{K-1} (l_{i}^{t*})^{2}}$. Thus, we get
$L_T^2 \leq D\left(\Vert\mathbf{v}\Vert^{2}  + 4 K_1 L_T U_T\right)$.
The upper bound on $L_T$ is obtained by the largest root of the polynomial $L_T^2 - 4 K_1DL_TU_T-D\Vert\mathbf{v}\Vert^{2}$ which is $2K_1DU_T+D\sqrt{4K_1^2U_T^2 + \Vert\mathbf{v}\Vert^{2}}$. Using the fact that $\sqrt{a+b}\leq \sqrt{a} + \sqrt{b}$, we get $L_T \leq D\Vert\mathbf{v}\Vert+ 4K_1DU_T$. Which means,
\begin{align*}
\sum_{t=1}^T\sum_{i=1}^{K-1}(l_i^t)^2 \leq D^2\left(\Vert\mathbf{v}\Vert+ 4K_1\sqrt{\sum_{t=1}^T\sum_{i=1}^{K-1}(l_i^{t*})^2}\right)^2
\end{align*}
\end{proof}
We know that $\sum_{i=1}^{K-1}(l_i)^2$ is an upper bound on the mean absolute error (MAE). Thus, $\sum_{t=1}^T\sum_{i=1}^{K-1}(l_i^t)^2$ is an upper bound on the number of mistakes in $T$ trials. Thus,  
$\sum_{t=1}^{T}L_I^{MAE}(\ww^t.\xx^t,\thetaa^t,y_l^t,y_r^t)
\leq D^2
\left(\Vert\mathbf{v}\Vert+ 4K_1 \sqrt{\sum_{t=1}^T\sum_{i=1}^{K-1}(l_i^{t*})^2}\right)^2$.
Note that this bound is same as given in \cite{Crammer:2001} when $c=0$ (exact label case). Now we will consider the ideal case.

\begin{corr}(Mistake Bound of PA in Ideal Case)
Let $(\mathbf{x}^{1}, y_{l}^{1}, y_{r}^{1}), \cdots,(\mathbf{x}^{T}, y_{l}^{T}, y_{r}^{T})$ be the sequence of examples presented to PA algorithm. Let $ {\bf v}^*=[{\mathbf{u}^*}'\;\;{{\bf b}^*}']$ be the parameters of an ideal classifier (where ${\bf u}^*\in \R^d $ and $ {\bf b}^*\in \R^{K-1} $) such that $\mathbf{u}^*.\xx^t -b_i^* \geq 1,\forall i \in [y_l^t-1],\forall t\in[T]$ and $\mathbf{u}^*.\xx^t - b_i^* \leq -1, \forall i \in \{y_r^t,\ldots,K-1\},\forall t\in[T]$. Then, the mistake bound of the PA algorithm is as follows.
\begin{align*}
\sum_{t=1}^{T}(l_i^t)^2 \leq \Vert \mathbf{v}^*\Vert^{2} \left(1+R^2(K-c-1)\right) 
\end{align*}
where $c=\min_{t\in [T]}(y_r^t-y_l^t)$ and $R^2=\max_{t\in[T]}\Vert\xx^t \Vert^2$.
\end{corr}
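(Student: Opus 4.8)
The plan is to treat this statement as a direct specialization of Theorem~\ref{thm:regPA} to a comparator that suffers no loss, so almost all of the work is already done. The first step is to verify that the hypotheses on the ideal classifier $\mathbf{v}^*$ force $l_i^{t*}=0$ for every threshold and every trial. Indeed, for $i\in[y_l^t-1]$ the assumption $\mathbf{u}^*.\xx^t-b_i^*\geq 1$ gives $[1-\mathbf{u}^*.\xx^t+b_i^*]_+=0$, and for $i\in\{y_r^t,\ldots,K-1\}$ the assumption $\mathbf{u}^*.\xx^t-b_i^*\leq -1$ gives $[1+\mathbf{u}^*.\xx^t-b_i^*]_+=0$. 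Hence every summand of the surrogate loss in Eq.~(\ref{eq:loss}) evaluated at $\mathbf{v}^*$ vanishes, so $l_i^{t*}=0$ for all $i$, and in particular for all $i\in S_l^t\cup S_r^t$.

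With this in hand, I would reuse, rather than re-derive, the per-trial lower bound on $\Delta_t$ already established inside the proof of Theorem~\ref{thm:regPA}, namely
\begin{align*}
\Delta_t \geq \frac{\sum_{i\in S_l^t\cup S_r^t}(l_i^t)^2}{D} - 4\sum_{i\in S_l^t\cup S_r^t}l_i^t\sum_{j\in S_l^t\cup S_r^t}l_j^{t*},
\end{align*}
where $D=1+R^2(K-c-1)$. Substituting $l_j^{t*}=0$ annihilates the cross term, leaving $\Delta_t\geq \frac{1}{D}\sum_{i\in S_l^t\cup S_r^t}(l_i^t)^2$. Since $l_i^t=0$ for $i\notin S_l^t\cup S_r^t$, this is exactly $\Delta_t\geq \frac{1}{D}\sum_{i=1}^{K-1}(l_i^t)^2$.

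Summing this per-trial inequality over $t=1,\ldots,T$ and combining it with the telescoping upper bound $\sum_{t=1}^T\Delta_t\leq \Vert\mathbf{u}^*\Vert^2+\Vert\mathbf{b}^*\Vert^2=\Vert\mathbf{v}^*\Vert^2$ from Eq.~(\ref{eq:ub}) yields $\frac{1}{D}\sum_{t=1}^T\sum_{i=1}^{K-1}(l_i^t)^2\leq \Vert\mathbf{v}^*\Vert^2$, which rearranges to the claimed bound (with the inner sum over $i$). The one point that requires care — and the only place where a naive argument goes wrong — is that one must \emph{not} simply set $U_T=0$ in the final inequality of Theorem~\ref{thm:regPA}; that statement was obtained by bounding the larger root of a quadratic together with the relaxation $\sqrt{a+b}\leq\sqrt{a}+\sqrt{b}$, both of which cost a spurious factor of $D$ and would produce the weaker constant $D^2$. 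Working instead from the intermediate relation $L_T^2\leq D(\Vert\mathbf{v}^*\Vert^2+4K_1 L_T U_T)$ — equivalently, directly from the summed $\Delta_t$ bound above — and only then setting $U_T=0$ recovers the tight constant $D=1+R^2(K-c-1)$.
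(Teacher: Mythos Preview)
Your proposal is correct and follows the same high-level route as the paper: specialize Theorem~\ref{thm:regPA} to the comparator $\mathbf{v}^*$ by setting $l_i^{t*}=0$ for all $t,i$. The paper's own proof is literally the single sentence ``the bound in Theorem~\ref{thm:regPA} with $l_i^{t*}=0$,'' so your argument is strictly more detailed.

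In fact, your remark about $D$ versus $D^2$ is a genuine improvement over the paper's one-liner. Taken at face value, plugging $l_i^{t*}=0$ into the \emph{final} displayed inequality of Theorem~\ref{thm:regPA} gives $\sum_{t,i}(l_i^t)^2\leq D^2\Vert\mathbf{v}^*\Vert^2$, not the sharper $D\Vert\mathbf{v}^*\Vert^2$ claimed in the corollary; one must instead go back to the intermediate relation $L_T^2\leq D(\Vert\mathbf{v}^*\Vert^2+4K_1L_TU_T)$ (or, equivalently, to the summed per-trial $\Delta_t$ bound) before setting $U_T=0$. You identified this precisely, whereas the paper leaves it implicit. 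Your observation that the corollary's left-hand side should carry the inner sum $\sum_{i=1}^{K-1}$ is also correct.
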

The proof of above can be easily seen by using the bound in Theorem~\ref{thm:regPA} and keeping $l_i^{t*}=0,\forall t\in [T],\forall i \in [K-1]$. 
Now we present the mistake bound for PA-I algorithm.

\begin{theorem}(Mistake Bound of PA-I in General Case)
\label{thm:PA-I}
Let $(\mathbf{x}^{1}, y_{l}^{1}, y_{r}^{1})$, $ \cdots$ $(\mathbf{x}^{T}, y_{l}^{T}, y_{r}^{T})$ be the sequence of examples. Let $c=\min_{t\in[T]}(y_r^t-y_l^t)$ and $ R^2=\max_{t\in[T]}\Vert \mathbf{x}^{t} \Vert^{2}$. Let ${\bf v} = [{\bf u}'\;\;{\bf b}']'$ be the parameters of an arbitrary ranking function ($\mathbf{u}\in \R^d $ and $ \mathbf{b}\in \R^{K-1}$). Then, the mistake bound of PA-I algorithm is given as
\begin{align*}
\sum_{t=1}^T\sum_{i=1}^{K-1}l_i^t &\leq \sum_{t=1}^T\sum_{i=1}^{K-1}l_i^{t*}+\sqrt{DT}\Vert {\bf v}\Vert
\end{align*}
where $D=1+2R^2(K-c-1)^2$.
\end{theorem}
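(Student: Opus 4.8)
The plan is to run the same potential-telescoping argument used for PA in Theorem~\ref{thm:regPA}, but to exploit the extra cap $\lambda_i^t,\mu_i^t\le C$ that PA-I imposes, so that the per-trial ``cost'' stays $O(C^2)$ while the ``progress'' stays linear in the loss; optimizing over the free parameter $C$ then converts the quadratic potential bound into the additive $\sqrt{DT}\,\Vert\mathbf{v}\Vert$ regret. First I would observe that both ingredients from the PA analysis carry over unchanged: since PA-I uses the same update form $\mathbf{w}^{t+1}=\mathbf{w}^t+a^t\mathbf{x}^t$, $\theta_i^{t+1}=\theta_i^t-\lambda_i^t$ for $i\in S_l^t$, and $\theta_i^{t+1}=\theta_i^t+\mu_i^t$ for $i\in S_r^t$, the telescoping upper bound $\sum_{t=1}^T\Delta_t\le\Vert\mathbf{u}\Vert^2+\Vert\mathbf{b}\Vert^2=\Vert\mathbf{v}\Vert^2$ of Eq.~(\ref{eq:ub}) and the per-trial lower bound of Eq.~(\ref{eq:delta-t}) both remain valid.

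Next I would substitute PA-I's multipliers into Eq.~(\ref{eq:delta-t}). Writing $\ell_t=\sum_{i=1}^{K-1}l_i^t$ and $\ell_t^*=\sum_{i=1}^{K-1}l_i^{t*}$, the competitor cross-terms are controlled by $\lambda_i^t,\mu_i^t\le C$ and $l_i^{t*}\ge 0$, which yield $-2\sum_{i\in S_l^t}\lambda_i^t l_i^{t*}-2\sum_{i\in S_r^t}\mu_i^t l_i^{t*}\ge -2C\ell_t^*$. For the quadratic cost terms I would use $\vert a^t\vert=\vert\sum_{i\in S_l^t}\lambda_i^t-\sum_{i\in S_r^t}\mu_i^t\vert\le(\vert S_l^t\vert+\vert S_r^t\vert)C\le(K-c-1)C$ together with $\Vert\mathbf{x}^t\Vert^2\le R^2$ to bound $(a^t)^2\Vert\mathbf{x}^t\Vert^2$, and bound $\sum_{i\in S_l^t}(\lambda_i^t)^2+\sum_{i\in S_r^t}(\mu_i^t)^2$ by the same counting argument (recall $\vert S_l^t\vert+\vert S_r^t\vert\le K-c-1$). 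Collecting these gives a cost of at most $C^2D$ with $D=1+2R^2(K-c-1)^2$, and the target of this step is the clean per-trial inequality $\Delta_t\ge 2C(\ell_t-\ell_t^*)-C^2D$.

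With that per-trial bound in hand, summing over $t$ and invoking Eq.~(\ref{eq:ub}) gives $2C\sum_{t}(\ell_t-\ell_t^*)-C^2DT\le\sum_t\Delta_t\le\Vert\mathbf{v}\Vert^2$, i.e. $\sum_{t}\ell_t-\sum_{t}\ell_t^*\le\frac{\Vert\mathbf{v}\Vert^2}{2C}+\frac{DT\,C}{2}$. Finally I would optimize $C$: the right-hand side is minimized at $C=\Vert\mathbf{v}\Vert/\sqrt{DT}$, and substituting this value collapses the two terms into $\sqrt{DT}\,\Vert\mathbf{v}\Vert$, giving exactly $\sum_{t=1}^T\sum_{i=1}^{K-1}l_i^t\le\sum_{t=1}^T\sum_{i=1}^{K-1}l_i^{t*}+\sqrt{DT}\,\Vert\mathbf{v}\Vert$.

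The main obstacle is the progress half of the per-trial inequality. Because PA-I caps each multiplier at $C$, the multipliers are coupled through $a^t$ and are no longer simply $l_i^t\mp a^t\Vert\mathbf{x}^t\Vert^2$; recovering the clean term $2C\ell_t$ requires a saturated-versus-interior case split on each $\lambda_i^t,\mu_i^t$ (mirroring the scalar PA-I analysis of \cite{Crammer:2006}) and the fact, guaranteed by Lemma~\ref{thm1} and the SCA construction, that the thresholds carrying the largest losses are exactly the active ones, so that the support-restricted sum $\sum_{i\in S_l^t\cup S_r^t}l_i^t$ accounts for the full loss $\ell_t$. Matching the quadratic cost to the precise constant $D=1+2R^2(K-c-1)^2$ rather than a looser multiple, and justifying the optimal choice of $C$, are the remaining bookkeeping steps; everything else is routine algebra parallel to Theorem~\ref{thm:regPA}.
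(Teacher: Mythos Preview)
Your approach is \emph{different} from the paper's and, as written, has a genuine gap at exactly the point you flag as ``the main obstacle.''

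The paper does \emph{not} run the $\Delta_t$ telescoping argument for PA-I. Instead it uses the primal--dual framework of \cite{Shalev-Shwartz2007}: it interprets PA-I as incrementally increasing the dual objective ${\cal D}(\Omega)$ of the regularized $L_{IMC}$ risk, lower-bounds the per-trial dual increment by $C\sum_i\gamma(\cdot)$ with $\gamma(z)=\tfrac1C\min(z,C)\bigl(z-\tfrac12\min(z,C)\bigr)$, applies Jensen to the convex function $\gamma$ across \emph{all} pairs $(t,i)$, upper-bounds ${\cal D}(\Omega^{T+1})$ by weak duality, and only then inverts via $\gamma^{-1}(z)\le z+\tfrac C2$. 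The Jensen step is what produces the leading constant $1$ in $D=1+2R^2(K-c-1)^2$.

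Your per-trial target $\Delta_t\ge 2C(\ell_t-\ell_t^*)-C^2D$ with this $D$ is simply false. Take $\|\mathbf{x}^t\|\to0$ (so $D\to1$), $\ell_t^*=0$, and two active thresholds with $l_1^t=l_2^t=C/2$; then all multipliers are interior and equal to $C/2$, and from Eq.~(\ref{eq:delta-t}) one gets $\Delta_t\ge 2\cdot\tfrac C2\cdot\tfrac C2\cdot2-2(\tfrac C2)^2=\tfrac{C^2}{2}$, whereas $2C\ell_t-C^2D=2C\cdot C-C^2=C^2$. More generally, in the interior regime $2\lambda_i^tl_i^t-(\lambda_i^t)^2=(l_i^t)^2-(a^t)^2\|\mathbf x^t\|^4$, and the elementary bound $(l_i^t)^2\ge 2Cl_i^t-C^2$ costs you a $C^2$ \emph{per threshold}. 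Carrying this through gives at best
\[
\Delta_t\ge 2C(\ell_t-\ell_t^*)-C^2\bigl[(K-c-1)+R^2(K-c-1)^2+\cdots\bigr],
\]
so the constant you obtain after optimizing $C$ satisfies $D'\ge K-c-1$ even when $R\to0$. Hence matching the paper's $D$ is not ``bookkeeping''; the $\Delta_t$ route cannot recover it, because it forces the saturated/interior case split \emph{per coordinate per trial} and cannot amortize across $(t,i)$ the way the dual-convexity argument does.

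In short: your plan would yield a valid regret bound of the same \emph{shape}, but with a strictly worse $D$ (by a factor of order $K-c-1$ in the small-$R$ regime). To prove the theorem as stated you need the primal--dual route, specifically the convexity of $\gamma$ and Jensen's inequality, as in Appendix~\ref{app:PA-I}.
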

The mistake bound proof for PA-I uses ideas from primal-dual techniques \cite{Shalev-Shwartz2007}. The proof is given in Appendix~\ref{app:PA-I}.
\begin{corr}(Mistake Bound of PA-I in Ideal Case)
\label{thm:PA-I}
Let $(\mathbf{x}^{1}, y_{l}^{1}, y_{r}^{1})$, $ \cdots$ $(\mathbf{x}^{T}, y_{l}^{T}, y_{r}^{T})$ be the sequence of examples. Let $c=\min_{t\in[T]}(y_r^t-y_l^t)$ and $ R^2=\max_{t\in[T]}\Vert \mathbf{x}^{t} \Vert^{2}$. Let there exists an ideal ranking function defined by ${\bf v}^* = [{{\bf u}*}'\;\;{{\bf b}^*}']'$ ($\mathbf{u}^*\in \R^d $ and $ \mathbf{b}^*\in \R^{K-1}$) such that $\mathbf{u}^*.\xx^t -b_i^* \geq 1,\forall i \in [y_l^t-1],\forall t\in[T]$ and $\mathbf{u}^*.\xx^t - b_i^* \leq -1, \forall i \in \{y_r^t,\ldots,K-1\},\forall t\in[T]$. Then, the mistake bound of PA-I algorithm is given as
\begin{align*}
\sum_{t=1}^T\sum_{i=1}^{K-1}l_i^t &\leq \sqrt{DT}\Vert {\bf v}\Vert
\end{align*}
where $D=1+2R^2(K-c-1)^2$.
\end{corr}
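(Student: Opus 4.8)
The plan is to obtain this corollary as an immediate specialization of the general-case bound for PA-I proved in the preceding theorem, in exact analogy with the way the ideal-case corollary for PA follows from its general-case theorem. The only fact that needs checking is that, under the stated margin conditions, the reference predictor $({\bf u}^*,{\bf b}^*)$ incurs zero loss on every threshold at every trial, so that the term $\sum_{t=1}^T\sum_{i=1}^{K-1}l_i^{t*}$ that appears in the general bound drops out.

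First I would recall that $l_i^{t*}$ is the value of the $i$-th summand of the interval-insensitive surrogate $L_{IMC}$ of Eq.~(\ref{eq:loss}), evaluated at the fixed predictor with $f(\xx^t)={\bf u}^*.\xx^t$ and thresholds $b_i^*$. Explicitly, for a left threshold $i\in[y_l^t-1]$ one has $l_i^{t*}=[1-{\bf u}^*.\xx^t+b_i^*]_+$, and for a right threshold $i\in\{y_r^t,\ldots,K-1\}$ one has $l_i^{t*}=[1+{\bf u}^*.\xx^t-b_i^*]_+$. The ideal hypotheses then make every hinge argument non-positive: on the left, ${\bf u}^*.\xx^t-b_i^*\geq 1$ gives $1-{\bf u}^*.\xx^t+b_i^*\leq 0$; on the right, ${\bf u}^*.\xx^t-b_i^*\leq -1$ gives $1+{\bf u}^*.\xx^t-b_i^*\leq 0$. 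Hence $l_i^{t*}=0$ for every $i\in[K-1]$ and every $t\in[T]$.

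Finally I would substitute $l_i^{t*}=0$ into the general-case inequality $\sum_{t=1}^T\sum_{i=1}^{K-1}l_i^t\leq\sum_{t=1}^T\sum_{i=1}^{K-1}l_i^{t*}+\sqrt{DT}\Vert{\bf v}^*\Vert$, leaving the constant $D=1+2R^2(K-c-1)^2$ untouched. The first sum on the right then vanishes and the bound collapses to $\sum_{t=1}^T\sum_{i=1}^{K-1}l_i^t\leq\sqrt{DT}\Vert{\bf v}^*\Vert$, which is exactly the claim (the statement writes $\Vert{\bf v}\Vert$ for $\Vert{\bf v}^*\Vert$). I anticipate no genuine obstacle: all the content lies in the loss-vanishing step, and the only point demanding care is to match the index ranges of the margin conditions (left thresholds below $y_l^t$, right thresholds from $y_r^t$ on) with the two sums defining $L_{IMC}$, so that every term actually contributing to $\sum_i l_i^{t*}$ is annihilated.
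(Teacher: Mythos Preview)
Your proposal is correct and matches the paper's own argument: the paper states that the corollary is immediate from the general-case PA-I theorem by setting $l_i^{t*}=0$ for all relevant $t$ and $i$. You give more detail than the paper does in verifying that the margin conditions force every hinge term $l_i^{t*}$ to vanish, but the route is identical.
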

The proof of above Corollary is immediate from Theorem~\ref{thm:PA-I} by putting $l_i^{t*}=0,\;\forall t* \in [T],\;\forall i \in \{1,\ldots, y_l^{t*}-1,y_r^{t*},\ldots,K-1\}$. 

\begin{theorem}(Mistake Bound of PA-II in General Case) Let $(\mathbf{x}^{1}, y_{l}^{1}, y_{r}^{1})$, $ \cdots$ $(\mathbf{x}^{T}, y_{l}^{T}, y_{r}^{T})$ be the sequence of examples. Let $c=\min_{t\in[T]}(y_r^t-y_l^t)$ and $R^2=\max_{t\in[T]}\Vert \xx^t\Vert^2$. Let $ \mathbf{v}=[\mathbf{u}'\;\;\;\mathbf{b}']'$ $(\mathbf{u}\in \R^d,\; \mathbf{b}\in \R^{K-1})$ be the parameters of an arbitrary predictor. Then, for PA-II algorithm,
\begin{align*}
\sum_{t=1}^T\sum_{i=1}^{K-1}(l_i^t)^2 \leq  D\left(\Vert \mathbf{v} \Vert^2 +2C \sum_{t=1}^T\sum_{i =1}^{K-1} ( l_i^{t*})^2\right)
\end{align*}
where $D=1+\frac{1}{2C}+R^2(K-c-1)$.
\end{theorem}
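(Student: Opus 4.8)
The plan is to reuse the potential-telescoping machinery of Theorem~\ref{thm:regPA}, changing only the algebra that depends on the explicit multipliers. The identity behind Eq.~(\ref{eq:delta-t}) uses nothing about the \emph{values} of $\lambda_i^t,\mu_i^t,a^t$: it only uses the update form $\mathbf{w}^{t+1}=\mathbf{w}^t+a^t\mathbf{x}^t$, $\theta_i^{t+1}=\theta_i^t-\lambda_i^t$ for $i\in S_l^t$ and $\theta_i^{t+1}=\theta_i^t+\mu_i^t$ for $i\in S_r^t$, the active-constraint identities $\theta_i^t=\mathbf{w}^t\cdot\mathbf{x}^t+l_i^t-1$ ($i\in S_l^t$) and $\theta_i^t=1+\mathbf{w}^t\cdot\mathbf{x}^t-l_i^t$ ($i\in S_r^t$), and the competitor inequalities $-b_i\ge 1-\mathbf{u}\cdot\mathbf{x}^t-l_i^{t*}$, $b_i\ge 1+\mathbf{u}\cdot\mathbf{x}^t-l_i^{t*}$. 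All of these hold verbatim for PA-II, so Eq.~(\ref{eq:delta-t}) is my starting point, and the whole task reduces to proving the per-trial estimate $\Delta_t\ge \frac{1}{D}\sum_{i=1}^{K-1}(l_i^t)^2-2C\sum_{i=1}^{K-1}(l_i^{t*})^2$. Summing over $t$ and inserting the upper bound $\sum_t\Delta_t\le\Vert\mathbf{v}\Vert^2$ from Eq.~(\ref{eq:ub}) then gives $\sum_t\sum_i(l_i^t)^2\le D(\Vert\mathbf{v}\Vert^2+2C\sum_t\sum_i(l_i^{t*})^2)$ after rearrangement.

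Writing $\beta=1+\frac{1}{2C}$, the PA-II multipliers give $l_i^t=\beta\lambda_i^t+a^t\Vert\mathbf{x}^t\Vert^2$ on $S_l^t$ and $l_i^t=\beta\mu_i^t-a^t\Vert\mathbf{x}^t\Vert^2$ on $S_r^t$. Substituting these into Eq.~(\ref{eq:delta-t}) and collecting the terms free of $l_i^{t*}$, the cross terms in $a^t$ cancel using $\sum_{i\in S_l^t}\lambda_i^t-\sum_{i\in S_r^t}\mu_i^t=a^t$, exactly as in the PA derivation, leaving
\[
\Delta_t\ge (a^t)^2\Vert\mathbf{x}^t\Vert^2+(2\beta-1)\Big(\sum_{i\in S_l^t}(\lambda_i^t)^2+\sum_{i\in S_r^t}(\mu_i^t)^2\Big)-2\sum_{i\in S_l^t}\lambda_i^t l_i^{t*}-2\sum_{i\in S_r^t}\mu_i^t l_i^{t*}.
\]
I would dispatch the competitor terms by Young's inequality with weight $2C$, i.e. $2\lambda_i^t l_i^{t*}\le\frac{(\lambda_i^t)^2}{2C}+2C(l_i^{t*})^2$ and similarly for $\mu_i^t$. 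The reason this exact weight is chosen is the identity $2\beta-1-\frac{1}{2C}=\beta$, which makes $(2\beta-1)(\lambda_i^t)^2-2\lambda_i^t l_i^{t*}\ge\beta(\lambda_i^t)^2-2C(l_i^{t*})^2$. This produces the required $-2C\sum_i(l_i^{t*})^2$ and reduces the claim to the purely trial-local inequality
\[
(a^t)^2\Vert\mathbf{x}^t\Vert^2+\beta\Big(\sum_{i\in S_l^t}(\lambda_i^t)^2+\sum_{i\in S_r^t}(\mu_i^t)^2\Big)\ge\frac{1}{D}\sum_{i\in S_l^t\cup S_r^t}(l_i^t)^2.
\]

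To establish this last inequality, let $m=\lvert S_l^t\rvert+\lvert S_r^t\rvert$ and use $\sum_{i\in S_l^t}l_i^t-\sum_{i\in S_r^t}l_i^t=a^t(\beta+m\Vert\mathbf{x}^t\Vert^2)$ to expand
\[
\sum_{i\in S_l^t}(\lambda_i^t)^2+\sum_{i\in S_r^t}(\mu_i^t)^2=\frac{1}{\beta^2}\Big(\sum_{i\in S_l^t\cup S_r^t}(l_i^t)^2-2\beta(a^t)^2\Vert\mathbf{x}^t\Vert^2-m(a^t)^2\Vert\mathbf{x}^t\Vert^4\Big).
\]
Substituting and simplifying collapses the left side to $\frac{1}{\beta}\sum_i(l_i^t)^2-(a^t)^2\Vert\mathbf{x}^t\Vert^2\big(1+\frac{m\Vert\mathbf{x}^t\Vert^2}{\beta}\big)$. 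Bounding the subtracted $a^t$ term from above by the same Cauchy--Schwarz inequality used in Theorem~\ref{thm:regPA}, namely $(\sum_{i\in S_l^t}l_i^t-\sum_{i\in S_r^t}l_i^t)^2\le m\sum_{i\in S_l^t\cup S_r^t}(l_i^t)^2$, the bracket telescopes and the whole expression is at least $\frac{1}{\beta+m\Vert\mathbf{x}^t\Vert^2}\sum_i(l_i^t)^2$. Finally $m\le K-c-1$ (since $\lvert S_l^t\rvert\le y_l^t-1$, $\lvert S_r^t\rvert\le K-y_r^t$, so $m\le K-1-(y_r^t-y_l^t)$) and $\Vert\mathbf{x}^t\Vert^2\le R^2$ give $\beta+m\Vert\mathbf{x}^t\Vert^2\le D$, closing the per-trial bound after extending the sum from $S_l^t\cup S_r^t$ to all of $[K-1]$ exactly as done for PA (using $l_i^t=0$ off the support sets and $\sum_{i\in S_l^t\cup S_r^t}(l_i^{t*})^2\le\sum_{i=1}^{K-1}(l_i^{t*})^2$). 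The main obstacle is entirely in the bookkeeping of the $a^t$-dependent terms: one must track the exact cancellations so that the positive $(a^t)^2\Vert\mathbf{x}^t\Vert^2$ contribution and the negative $a^t$ terms combine into the single clean factor $1/(\beta+m\Vert\mathbf{x}^t\Vert^2)$, and one must fix the Young weight to be precisely $2C$ so the constants land on $D=1+\frac{1}{2C}+R^2(K-c-1)$ and $2C$ rather than on looser surrogates.
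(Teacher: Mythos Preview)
Your proposal is correct and follows essentially the same approach as the paper: both start from Eq.~(\ref{eq:delta-t}), decouple the competitor losses via Young's inequality with weight $2C$ (the paper writes this as subtracting the nonnegative squares $(\alpha\lambda_i^t-l_i^{t*}/\alpha)^2$ with $\alpha=1/\sqrt{2C}$, which is the same thing), then substitute the PA-II multipliers and use the Cauchy--Schwarz bound $(\sum_{S_l^t}l_i^t-\sum_{S_r^t}l_i^t)^2\le m\sum(l_i^t)^2$ to collapse everything to $\sum(l_i^t)^2/(\beta+m\Vert\mathbf{x}^t\Vert^2)\ge\sum(l_i^t)^2/D$. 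The only difference is cosmetic ordering: you substitute $l_i^t=\beta\lambda_i^t+a^t\Vert\mathbf{x}^t\Vert^2$ first and apply Young afterwards, whereas the paper applies Young first and then plugs in the explicit $\lambda_i^t,\mu_i^t$; the resulting algebra and constants are identical.
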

\begin{proof}  
Let $\alpha=\frac{1}{\sqrt{2C}}$. Then,
\begin{align*}
& \Delta_t \geq  -(a^{t})^{2}\Vert \mathbf{x}^{t} \Vert^{2}  - \sum_{i \in S_{l}^{t}}(\lambda_i^t)^2- \sum_{i \in S_{r}^{t}}(\mu_i^t)^{2} + \sum_{i \in S_{l}^{t}} 2\lambda_i^t( l_{i}^{t} - l_i^{t*})\\
&\;\;\;\;+ \sum_{i \in S_{r}^{t}} 2\mu_i^t(l_{i}^{t} - l_i^{t*}) \\
&\geq  -(a^{t})^{2}\Vert \mathbf{x}^{t} \Vert^{2}  - \sum_{i \in S_{l}^{t}}(\lambda_i^t)^2- \sum_{i \in S_{r}^{t}}(\mu_i^t)^{2} + \sum_{i \in S_{l}^{t}} 2\lambda_i^t( l_{i}^{t} - l_i^{t*})\\
&\;\;\;\;+ \sum_{i \in S_{r}^{t}} 2\mu_i^t(l_{i}^{t} - l_i^{t*}) -\sum_{i\in S_l^t}(\alpha \lambda_i^t-\frac{l_i^{t*}}{\alpha})^2-\sum_{i\in S_r^t}(\alpha \mu_i^t-\frac{l_i^{t*}}{\alpha})^2\\
&=  -(a^{t})^{2}\Vert \mathbf{x}^{t} \Vert^{2}  - \left(1+\frac{1}{2C}\right)\left(\sum_{i \in S_{l}^{t}}(\lambda_i^t)^2 + \sum_{i \in S_{r}^{t}}(\mu_i^t)^{2}\right)\\
&-2C \left(\sum_{i \in S_{l}^{t}} ( l_i^{t*})^2+ \sum_{i \in S_{r}^{t}} ( l_i^{t*})^2\right)  +2\left(\sum_{i\in S_l^t} \lambda_i^tl_i^t + \sum_{i\in S_r^t}\mu_i^tl_i^{t}\right)\\
 &\geq  -(a^{t})^{2}\Vert \mathbf{x}^{t} \Vert^{2}  - \sum_{i \in S_{l}^{t}}\frac{\left(a^{t} \Vert \mathbf{x}^{t} \Vert^{2} - l_{i}^{t}\right)^2}{1 + \frac{1}{2C}}- \sum_{i \in S_{r}^{t}}\frac{\left(l_{i}^{t} + a^{t}\Vert \mathbf{x}^{t} \Vert^{2}\right)^2}{1 + \frac{1}{2C}}\\
 &\;\;\;\;-2C \sum_{i \in S_{l}^{t}\cup S_r^t} ( l_i^{t*})^2+2\left(\sum_{i\in S_l^t} \lambda_i^tl_i^t + \sum_{i\in S_r^t}\mu_i^tl_i^{t}\right)\\
&\geq \frac{\sum_{i\in S_l^t \cup S_r^t}(l_i^t)^2}{1+\frac{1}{2C}+R^2(K-c-1)} -2C \sum_{i \in S_{l}^{t}\cup S_r^t} ( l_i^{t*})^2
\end{align*}
We used $(\sum_{i\in S_l^t}l_i^t - \sum_{i\in S_r^t}l_i^t)^2 \leq (\vert S_l^t \vert + \vert S_r^t \vert) \sum_{i\in S_l^t \cup S_r^t} (l_i^t)^2$. Comparing the lower and upper bounds on $\sum_{t=1}^T \Delta t$. Let $D=1+\frac{1}{2C}+R^2(K-c-1)$, then
\begin{align*}
 \sum_{t=1}^T\sum_{i\in S_l^t \cup S_r^t}(l_i^t)^2 &\leq D\left(\Vert \mathbf{v} \Vert^2 +2C \sum_{t=1}^T\sum_{i \in S_{l}^{t}\cup S_r^t} ( l_i^{t*})^2\right) \\
& \leq D\left(\Vert \mathbf{v} \Vert^2 +2C \sum_{t=1}^T\sum_{i =1}^{K-1} ( l_i^{t*})^2\right)
\end{align*}
We know that $l_i^t = 0,\;\forall i \notin S_l^t\cup S_r^t$. Thus,
\begin{align*}
\sum_{t=1}^T\sum_{i=1}^{K-1}(l_i^t)^2 \leq  D\left(\Vert \mathbf{v} \Vert^2 +2C \sum_{t=1}^T\sum_{i =1}^{K-1} ( l_i^{t*})^2\right)
\end{align*}
\end{proof}

\begin{corr}(Mistake Bound of PA-II in Ideal Case) Let $(\mathbf{x}^{1}, y_{l}^{1}, y_{r}^{1})$, $ \cdots$ $(\mathbf{x}^{T}, y_{l}^{T}, y_{r}^{T})$ be the sequence of examples. Let $ \mathbf{v}^*=[{\mathbf{u}^*}'\;\;{\mathbf{b}^*}']'$ $(\mathbf{u}^*\in \R^d,\; \mathbf{b}^*\in \R^{K-1})$ be the parameters of an ideal predictor such that $\mathbf{u}^*.\xx^t - b_i^*\geq 1,\forall i \in [y_l^t-1],\forall t\in [T]$ and $\mathbf{u}^*.\xx^t -b_i^*\leq -1,\forall i \in \{y_r^t,\ldots,K-1\},\forall t\in[T]$. Let $c=\min_{t\in[T]}(y_r^t-y_l^t)$ and $R^2=\max_{t\in[T]}\Vert\xx^t\Vert^2$. Then, for PA-II algorithm,
\begin{align*}
\sum_{t=1}^T\sum_{i=1}^{K-1}(l_i^t)^2 \leq \left(1+\frac{1}{2C}+R^2(K-c-1)\right) \Vert \mathbf{v} \Vert^2  
\end{align*}
\end{corr}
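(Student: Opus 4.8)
The plan is to derive this Corollary as an immediate specialization of the general-case bound (Mistake Bound of PA-II in General Case), exactly as was done for the PA and PA-I ideal-case corollaries. The only thing I need to verify is that the ideal margin conditions force the reference loss to vanish identically, after which the general bound collapses to the stated form.

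First I would identify the per-threshold reference losses $l_i^{t*}$ as the hinge terms appearing in $L_{IMC}$ when evaluated at the ideal predictor $\mathbf{v}^*=[{\mathbf{u}^*}'\;\;{\mathbf{b}^*}']'$, namely $l_i^{t*}=[1-\mathbf{u}^*.\xx^t+b_i^*]_+$ for $i\in[y_l^t-1]$ and $l_i^{t*}=[1+\mathbf{u}^*.\xx^t-b_i^*]_+$ for $i\in\{y_r^t,\ldots,K-1\}$. The ideal assumption $\mathbf{u}^*.\xx^t-b_i^*\geq 1$ on the left makes the argument $1-\mathbf{u}^*.\xx^t+b_i^*\leq 0$, and $\mathbf{u}^*.\xx^t-b_i^*\leq -1$ on the right makes the argument $1+\mathbf{u}^*.\xx^t-b_i^*\leq 0$. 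Hence every hinge argument is non-positive and $l_i^{t*}=0$ for all $i\in[K-1]$ and all $t\in[T]$.

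Next I would invoke the general-case Theorem with the arbitrary predictor $\mathbf{v}$ taken to be the ideal one $\mathbf{v}^*$. Substituting $l_i^{t*}=0$ annihilates the term $2C\sum_{t=1}^T\sum_{i=1}^{K-1}(l_i^{t*})^2$, so the bound $\sum_{t=1}^T\sum_{i=1}^{K-1}(l_i^t)^2 \leq D\big(\Vert\mathbf{v}^*\Vert^2+2C\sum_{t=1}^T\sum_{i=1}^{K-1}(l_i^{t*})^2\big)$ reduces to $\sum_{t=1}^T\sum_{i=1}^{K-1}(l_i^t)^2\leq D\Vert\mathbf{v}^*\Vert^2$ with $D=1+\frac{1}{2C}+R^2(K-c-1)$, which is precisely the claimed inequality.

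There is essentially no hard step here; the entire content is the observation that the margin-one separability assumption is exactly what is needed to zero out the surrogate reference loss $L_{IMC}$. The only point requiring a moment's care is matching the sign conventions in the two hinge families against the two families of ideal constraints (left thresholds carry $1-\mathbf{u}^*.\xx^t+b_i^*$, right thresholds carry $1+\mathbf{u}^*.\xx^t-b_i^*$), so that both collapse to $[\,\text{non-positive}\,]_+=0$ under the respective inequalities.
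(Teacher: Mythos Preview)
Your proposal is correct and takes essentially the same approach as the paper: the ideal-case corollary follows immediately from the general-case PA-II theorem by setting $l_i^{t*}=0$ for all $i,t$, exactly as was done for the PA and PA-I ideal-case corollaries. The paper does not even spell out a separate proof for this corollary, so your argument is, if anything, slightly more detailed than what the paper provides.
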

\vspace{0.2in}
\section{Experiments}
In this section, we describe the experiments performed. 
\subsection{Datasets Used}
We perform experiments on following four datasets. The features in each of the dataset are normalized to zero mean and unit variance coordinate wise. 
\begin{itemize}
\item \textbf{California: } It contains information about the median house value in California from the 1990 census \cite{KelleyPace1997}. There are 20460 instances with 9 features. The aim is to predict median house value which ranges from 14999 to 500001. Since, this is a regression dataset, we created 5 intervals i.e, (1-100000),(100001-200000),(200001-300000),(300001-400000),(400001-500001) each representing a class.
\item \textbf{Abalone: }This dataset \cite{Dua:2017} has information related to the physical measurement of Abalone found in Australia. It has 4177 instances with 8 attribute. The aim is to predict the age of the Abalone using 'Rings' attribute which varies from 1-29. Due to the skewness of the distribution, we divided the 'Rings' attribute into 4 intervals as 1-7, 8-9, 10-12, 13-29.
\item \textbf{Parkinson Tele-monitoring: } This dataset \cite{Dua:2017} comprises of voice recordings of 42 patients at various stages of Parkinson's disease. There are 5875 instances with 22 features in the dataset. The target variable of this dataset is 'total\_UPDRS' for the instance which varies from 7 to 54.992 . We divided the 'total\_UPDRS' attribute into 4 classes i.e, 7-17, 18-27, 28-37, 38- 54.992.
\item \textbf{MSLR:} This dataset comprises of query-url pairs along with the relevance label obtained from the label set of commercial web search engine Microsoft Bing \cite{DBLP:journals/corr/QinL13}. The relevance label ranges from 0 (irrelevant) - 4 (perfectly relevant). We performed our experiment on MSLR-WEB10K in which we took 1 of the available 5 folds. There are 723412 instances divided in 5 classes, with each instance having 136 features.
\end{itemize}

\subsection{Generating Interval Labels}
 We generate interval labels as follows. Let $m$ be the fraction of interval labeled examples in the training data. We first randomly choose $m\%$ of the training data for which we generate interval labels. Then for each candidate example, we randomly assign one of the following interval label: $[y-1, y],\; [y, y+1],\; [y-1, y],\; [y-2, y],\; [y, y+2],\; [y-2, y+2]$ where $y$ is the actual label. We consider two different values of $m$, namely 50\% and 75\%. 

\subsection{Comparison Results with Other Approaches}
We compare the performance of proposed PA algorithm and its variants with two approaches. (a) PRank \cite{Crammer:2001} algorithm which is online ranking algorithm using the actual labels. (b) Multi-class 
Perceptron algorithm \cite{Crammer_Multi:2003} as ranking can also be viewed as multiclass classification (even though multiclass classification uses more parameters than ranking and ignores the ordering among class labels). %(c) Widrow-Hoff \cite{Widrow:1988} which treats this as a regression problem and ignores the discrete nature of the labels.

\begin{figure*}[t]
\begin{center}
\begin{tabular}{ccc}
\includegraphics[scale=.16]{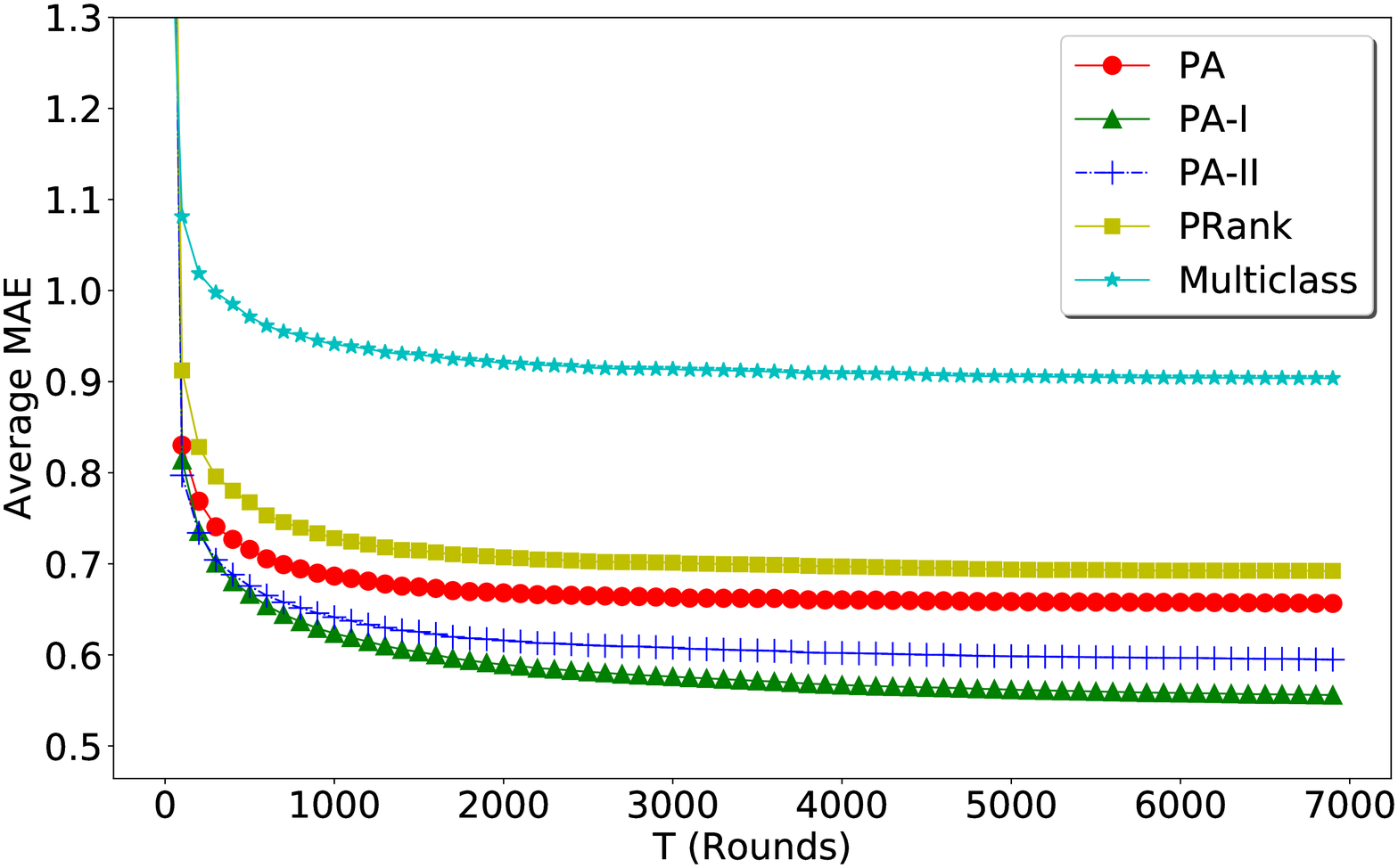} &
\includegraphics[scale=.16]{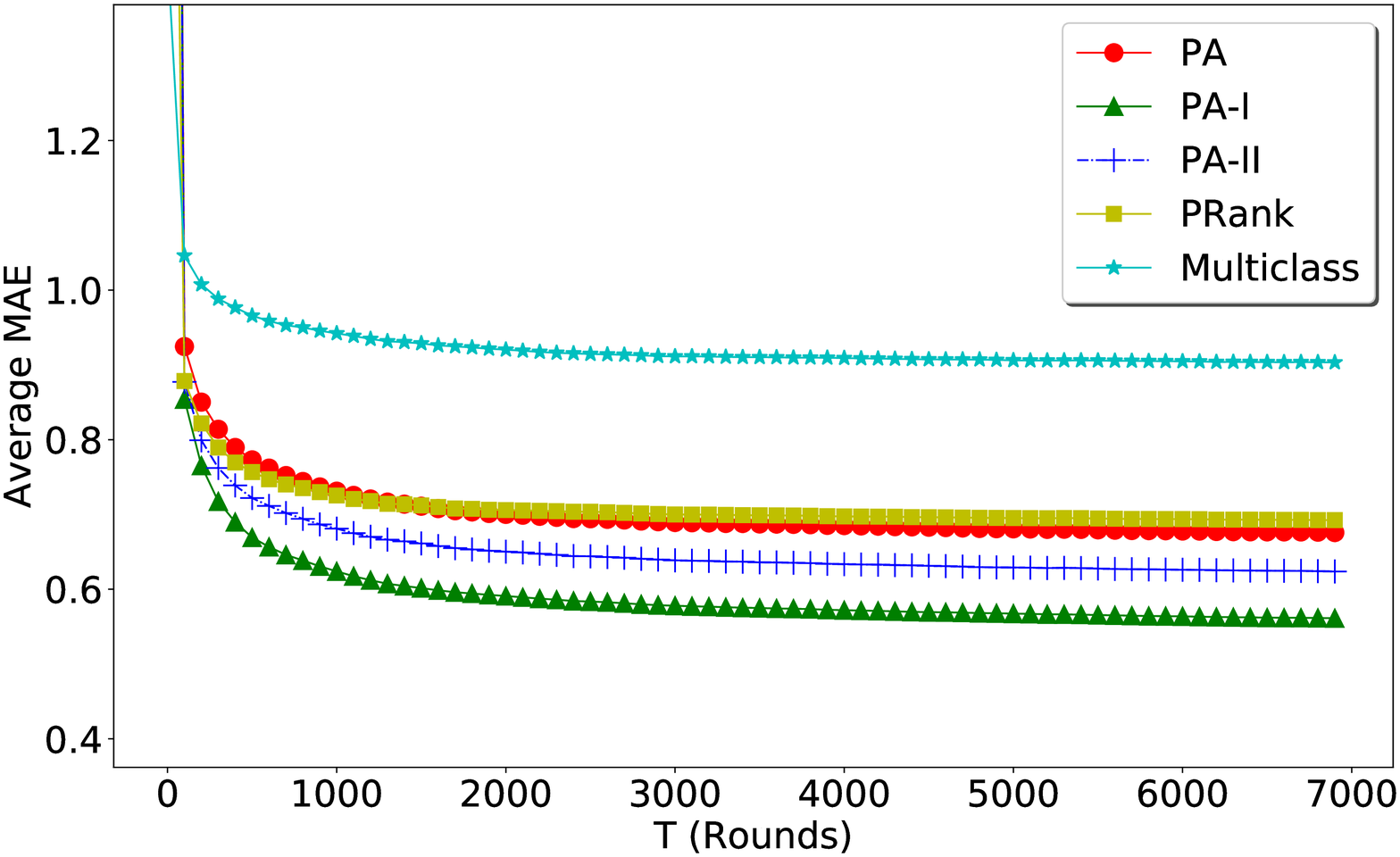} &
\includegraphics[scale=.16]{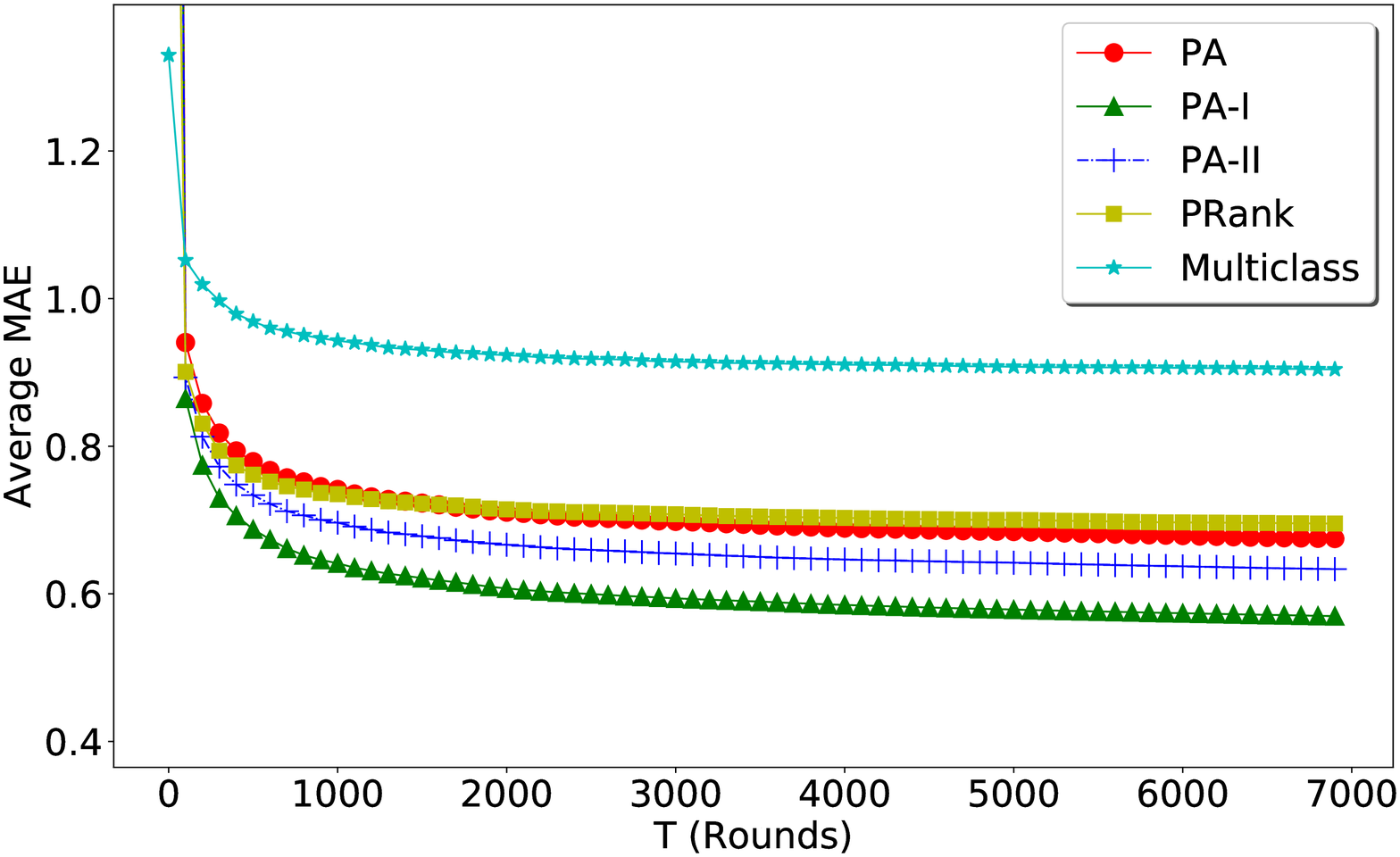} \\
California (exact labels) &
California (50\% partial labels) &
California (75\% partial labels) \\
\includegraphics[scale=.16]{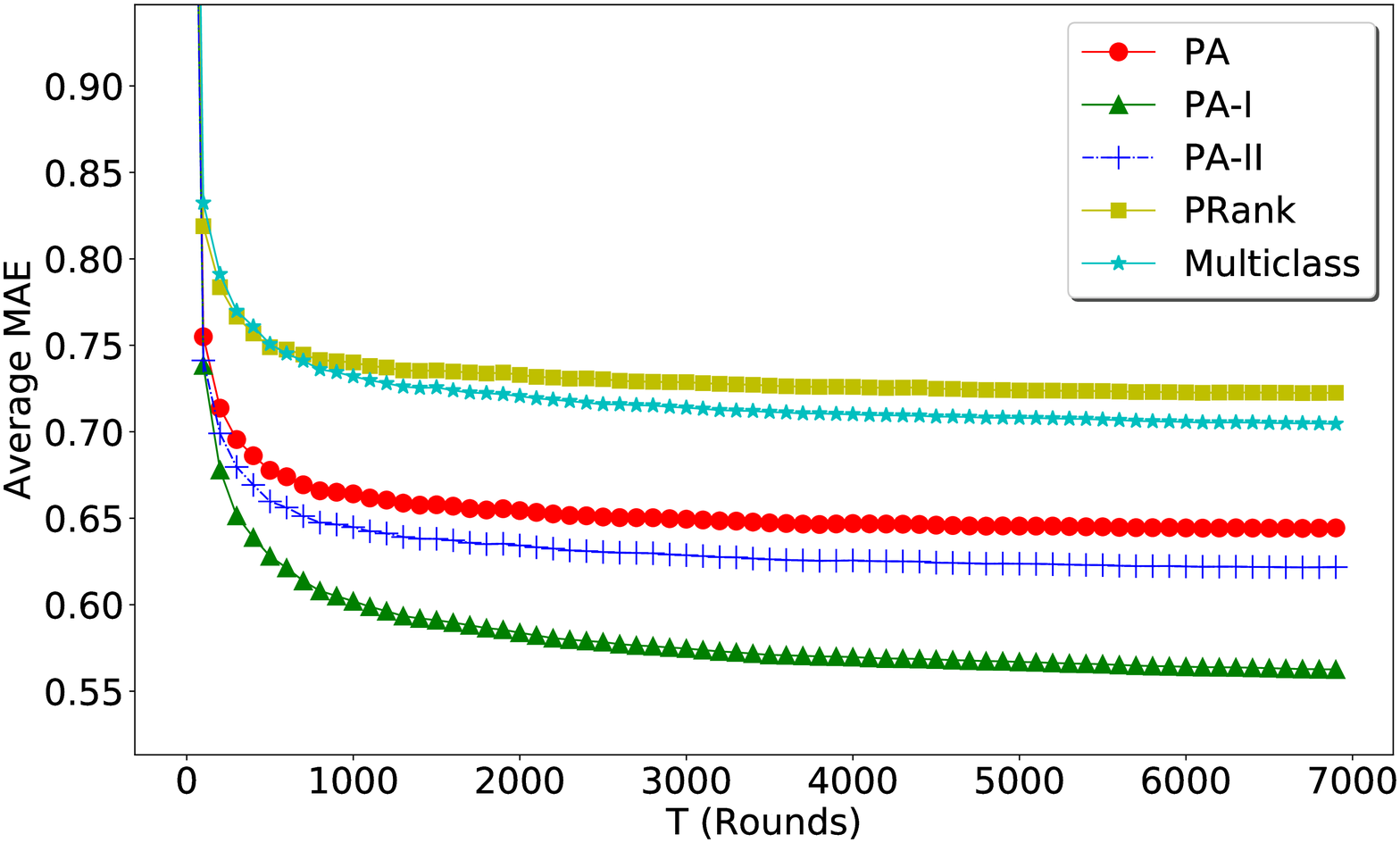} &
\includegraphics[scale=.16]{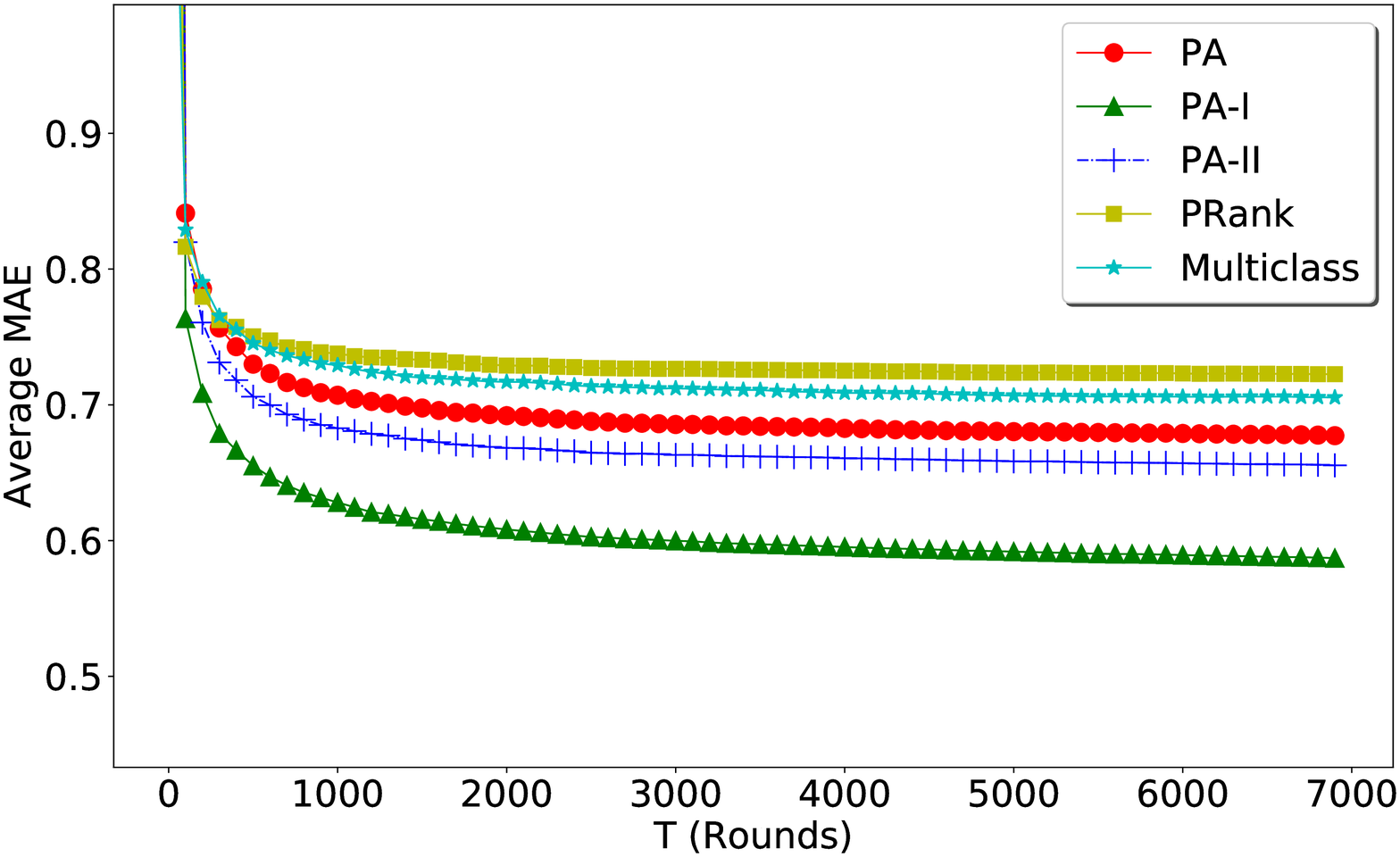} &
\includegraphics[scale=.16]{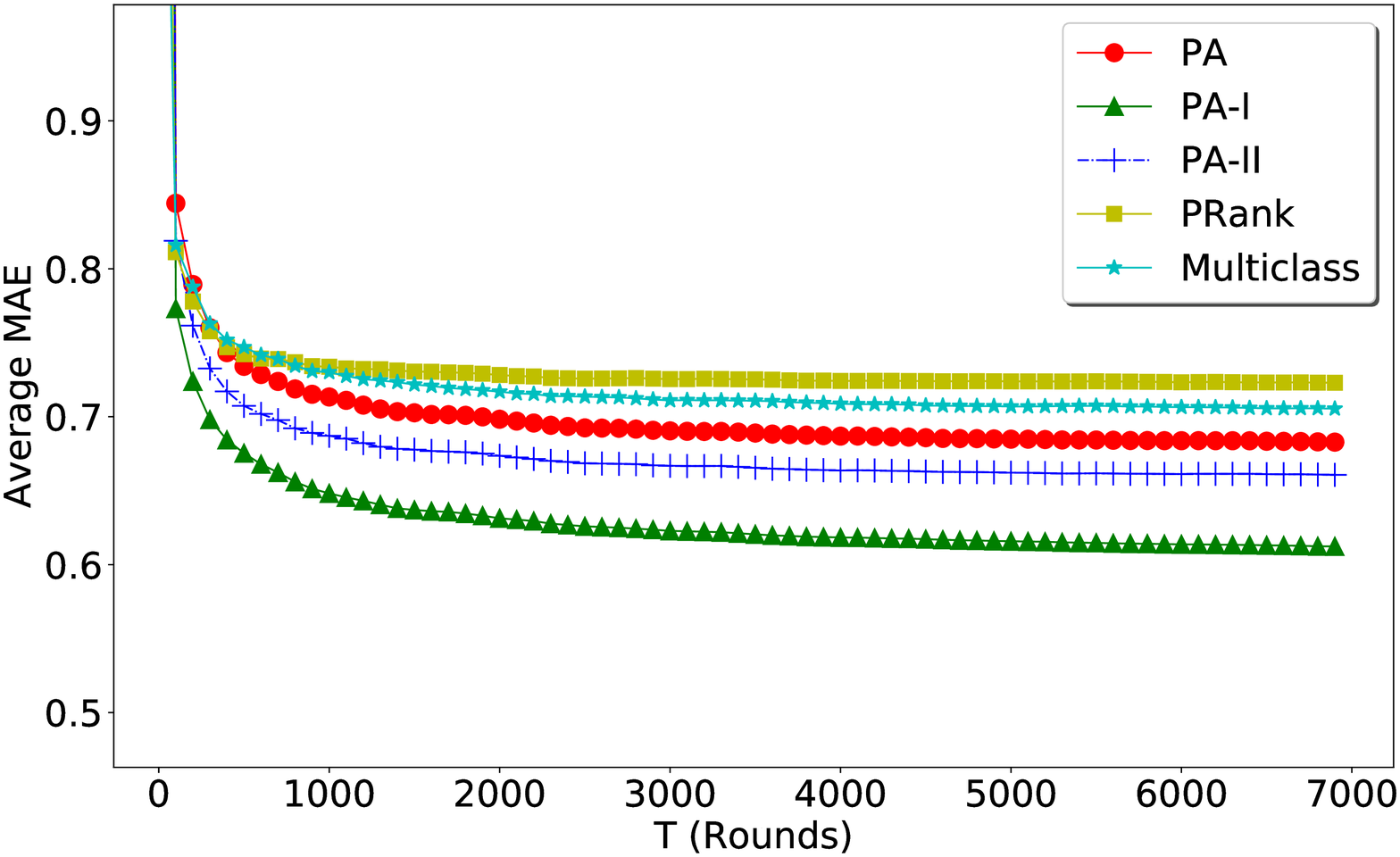} \\
Abalone (exact labels) &
Abalone (50\% partial labels) &
Abalone (75\% partial labels) \\
\includegraphics[scale=.16]{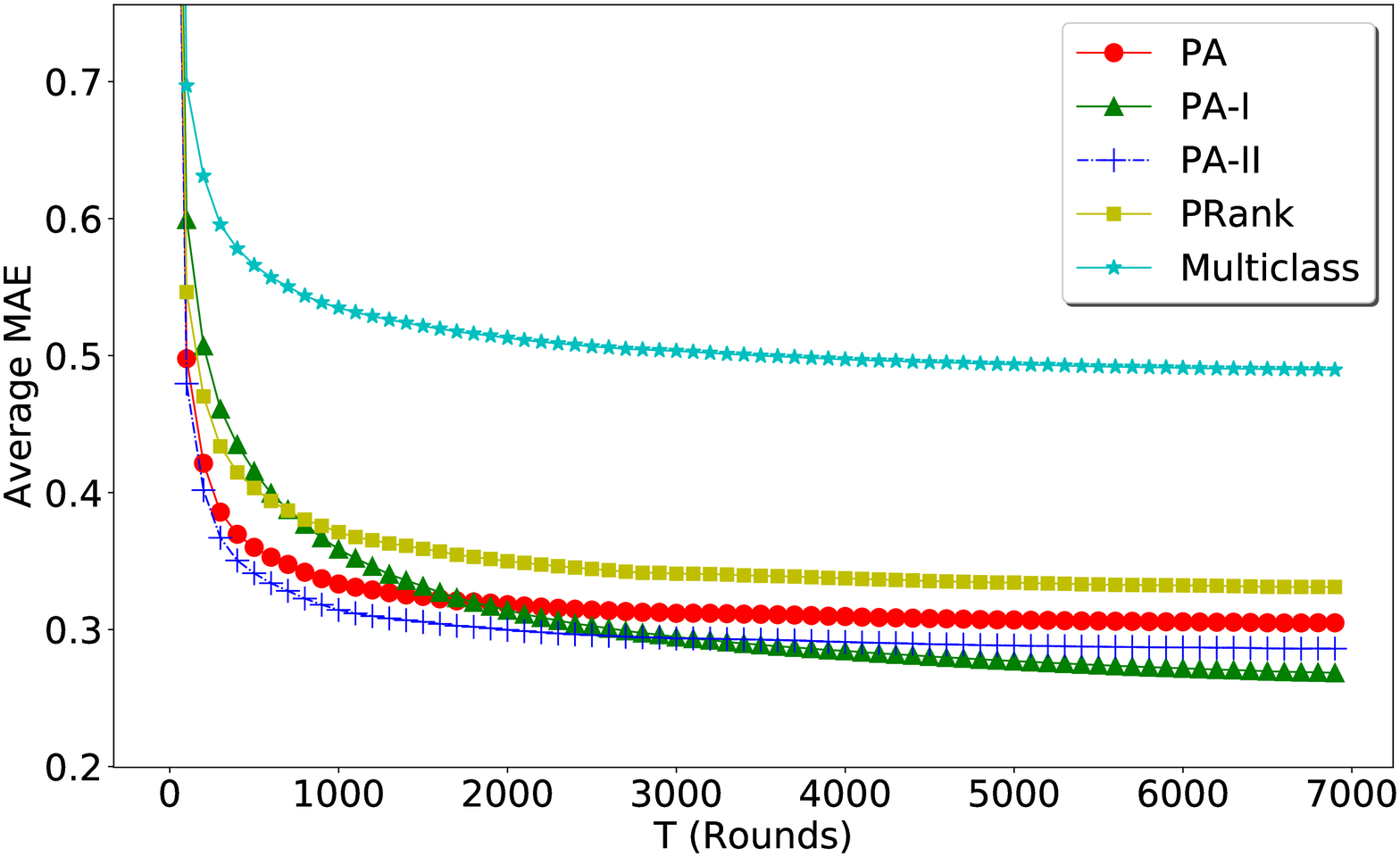} &
\includegraphics[scale=.16]{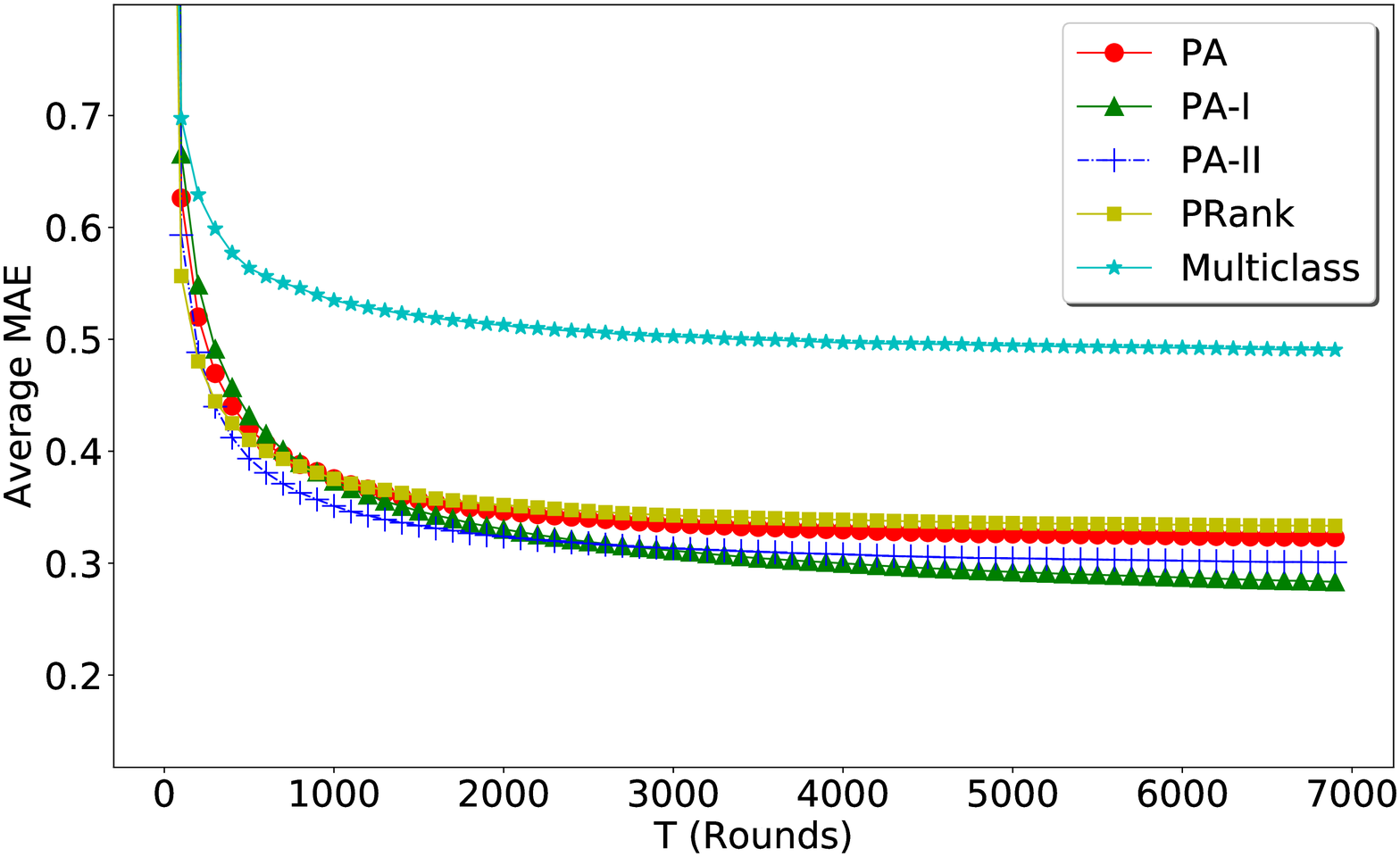} &
\includegraphics[scale=.16]{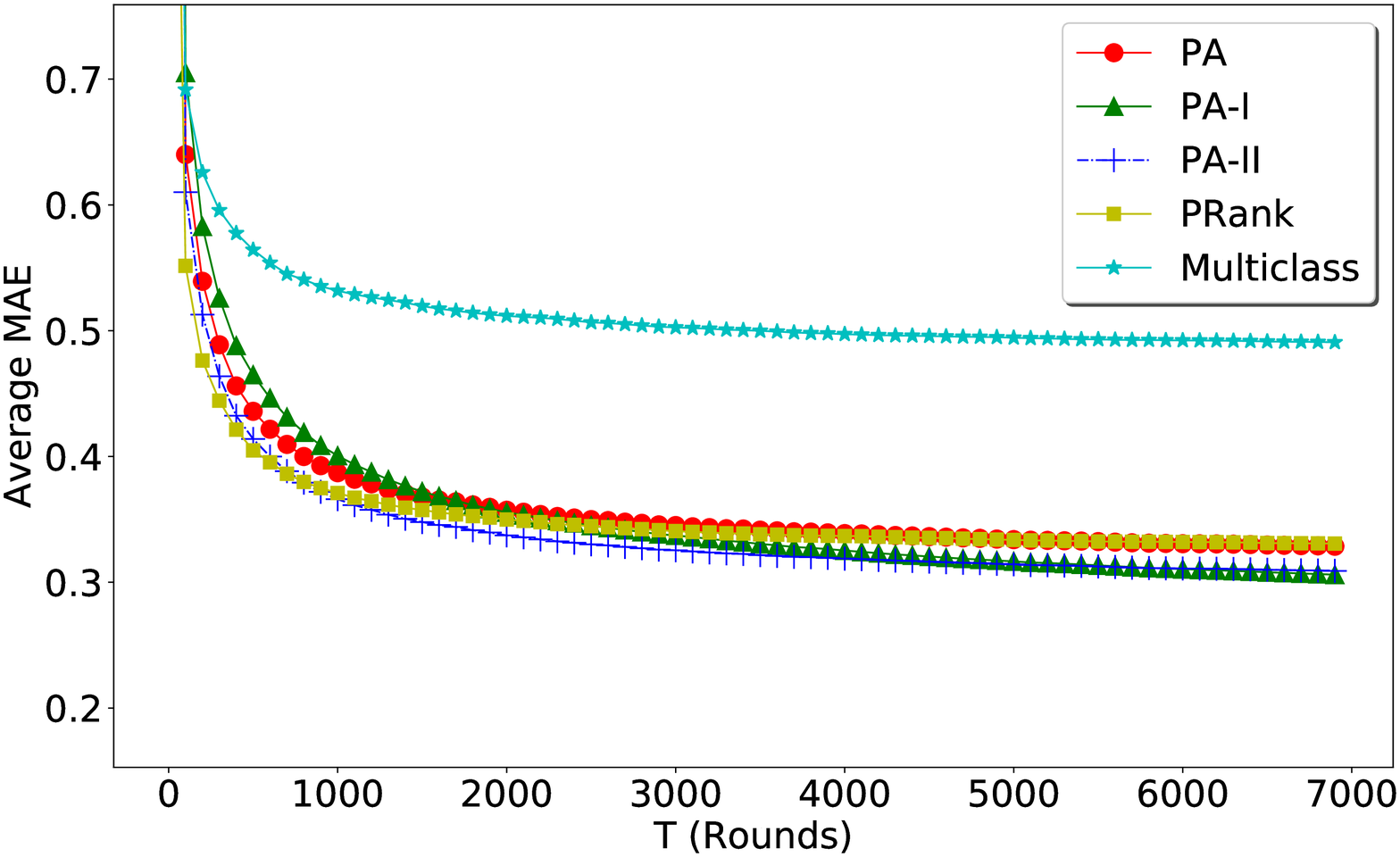}\\
Parkinson (exact labels) &
Parkinson (50\% partial labels) &
Parkinson (75\% partial labels) \\
\includegraphics[scale=.16]{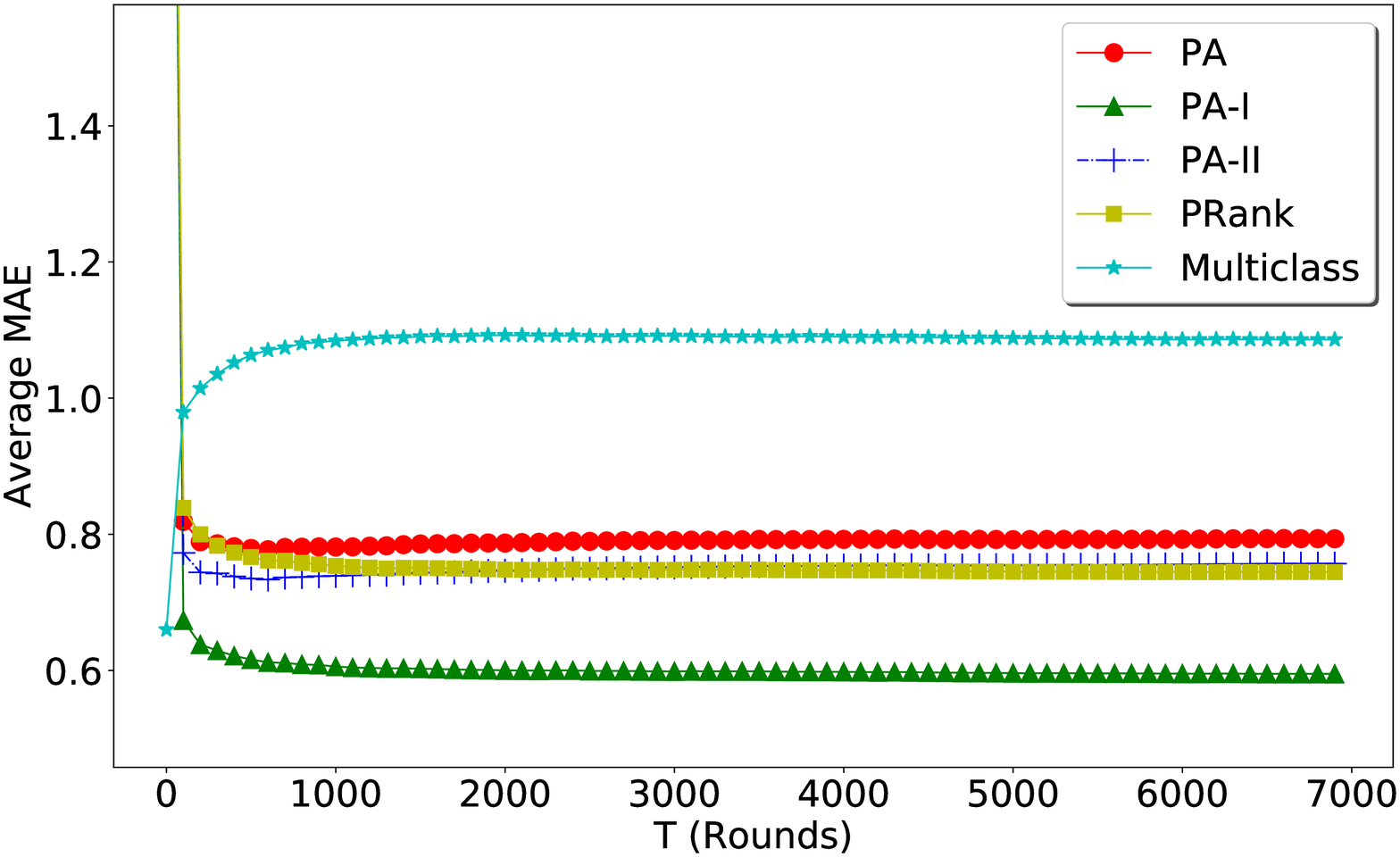} &
\includegraphics[scale=.16]{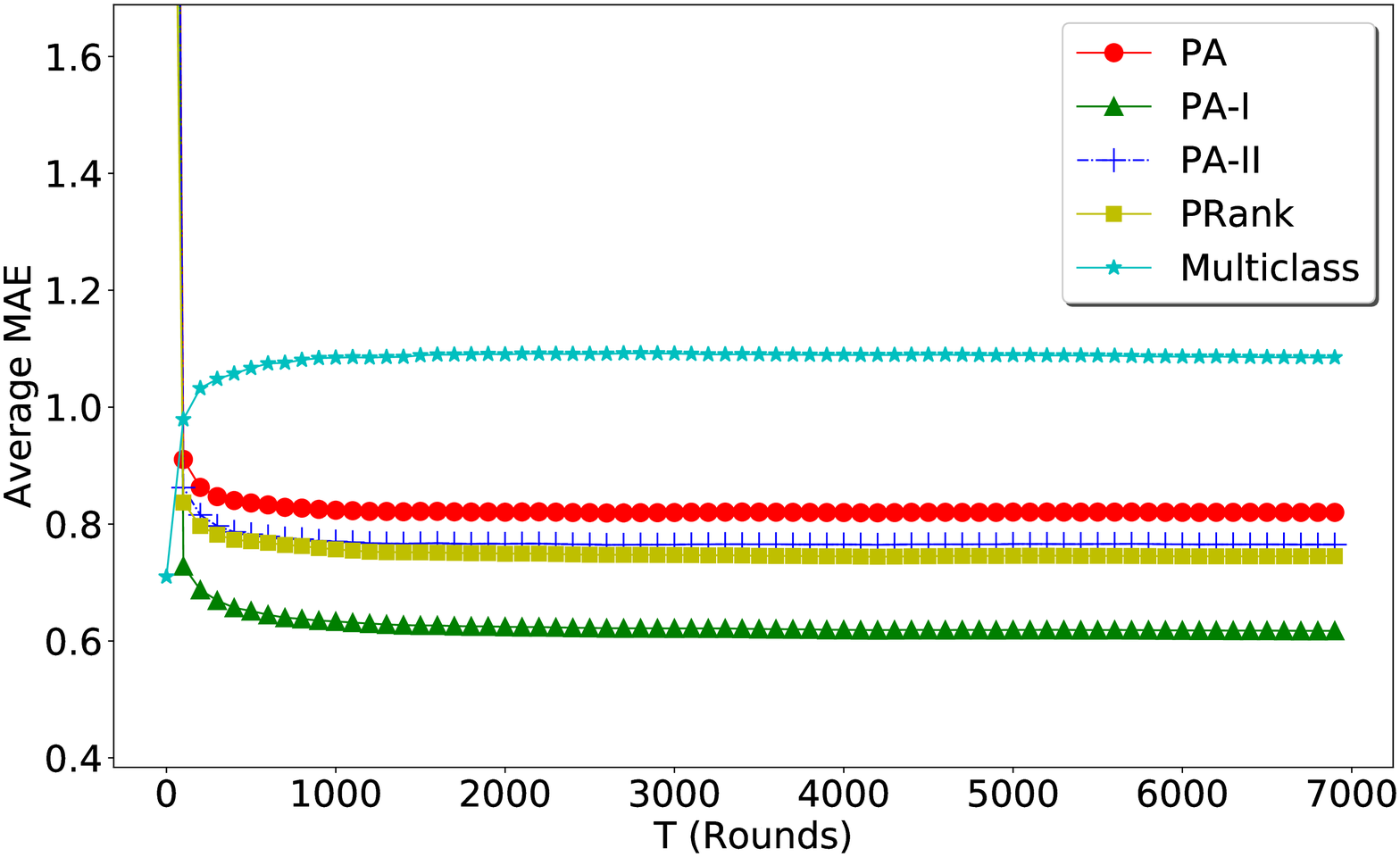} &
\includegraphics[scale=.16]{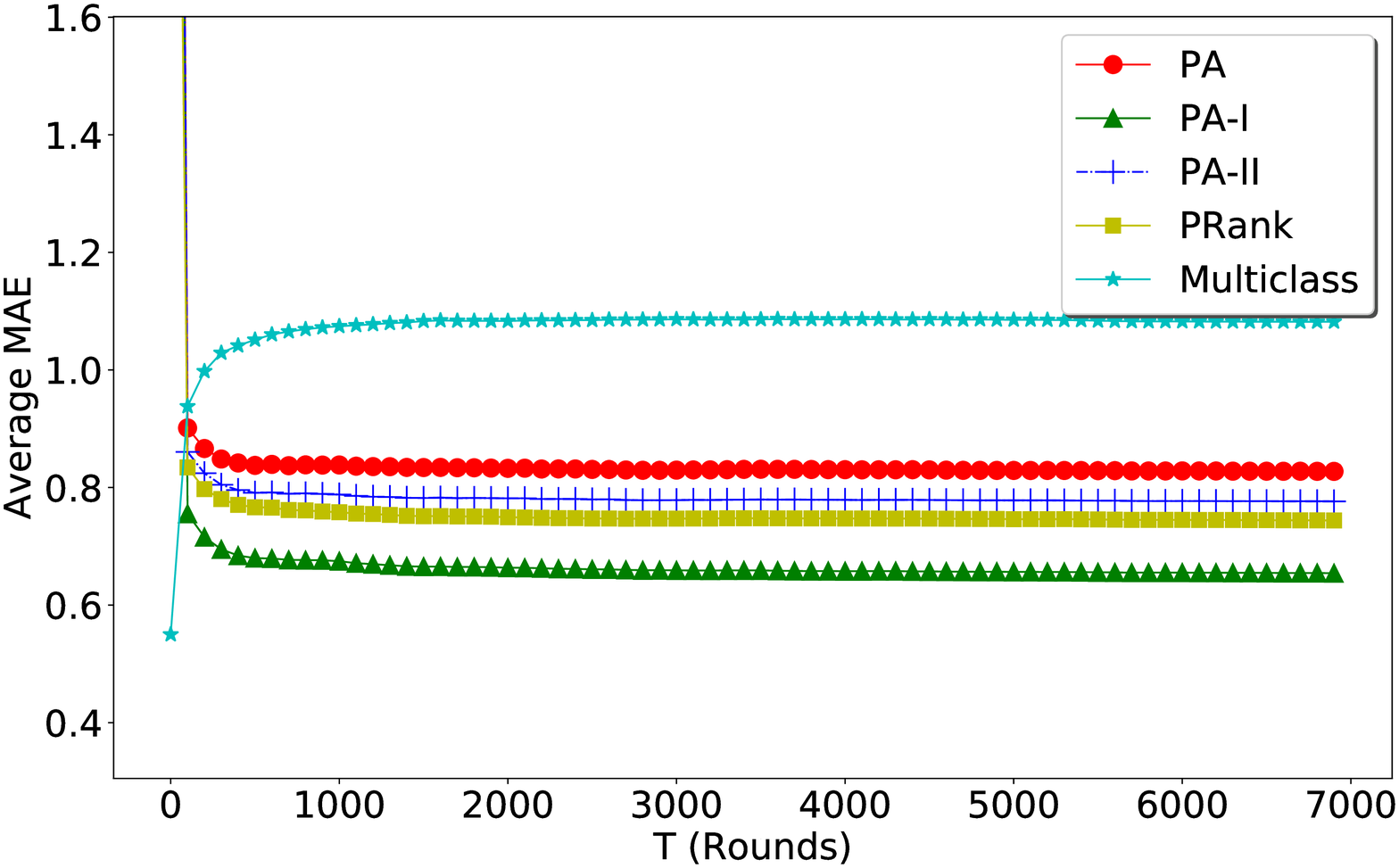}\\
MSLR (exact labels) &
MSLR (50\% partial labels) &
MSLR (75\% partial labels)
\end{tabular}
\caption{Comparison results of PA, PA-I and PA-II with MCP and PRank}
\label{fig:comparison}
\end{center}
\end{figure*}

For PRank and multi-class Perceptron (MCP) we used only the actual labels for training. For the proposed PA algorithms, we used interval labeled data during training. We took three different training sets for the proposed PA algorithms. First with 50\% interval labels, second with 75\% interval labels and third with actual (exact) labels. For our algorithms, we predicted the label for an example using the ranking function described in Eq.~(\ref{eq:ranking-function}).

We used the exact labels to compute the average $MAE$ (after every trial) for all the algorithms including the proposed PA algorithms. We find the average $MAE$ as $\frac{1}{t}\sum_{s=1}^t\vert \hat{y}^s - y^s\vert$ for $t=1\ldots 7000$. We repeat the process 100 times and average the instantaneous losses across the 100 runs.
Figure~\ref{fig:comparison}, we plot the average $MAE$ with respect to $t$. We observe the following.
\begin{itemize}
\item We see that for California and Abalone datasets, proposed PA algorithms (PA, PA-I and PA-II) trained using exact labels as well as using interval labels outperform the other algorithms.
\item For Parkinsons dataset, proposed PA variants outperform other approaches for exact labels case and 50\% interval labels case. For 75\% interval labels case, PA-I outperform PRank and MCP while PA and PA-II perform comparable to PRank.
\item For MSLR dataset, PA-I outperform both PRank and MCP for all 3 different kinds of labeling. Also, PA and PA-II always outperform MCP. PA-II performs comparable to PRank.
\end{itemize}
Thus, we see that the proposed PA algorithms perform better compared to PRank and MCP.

\begin{figure}
\begin{center}
\begin{tabular}{c}
\includegraphics[scale=.23]{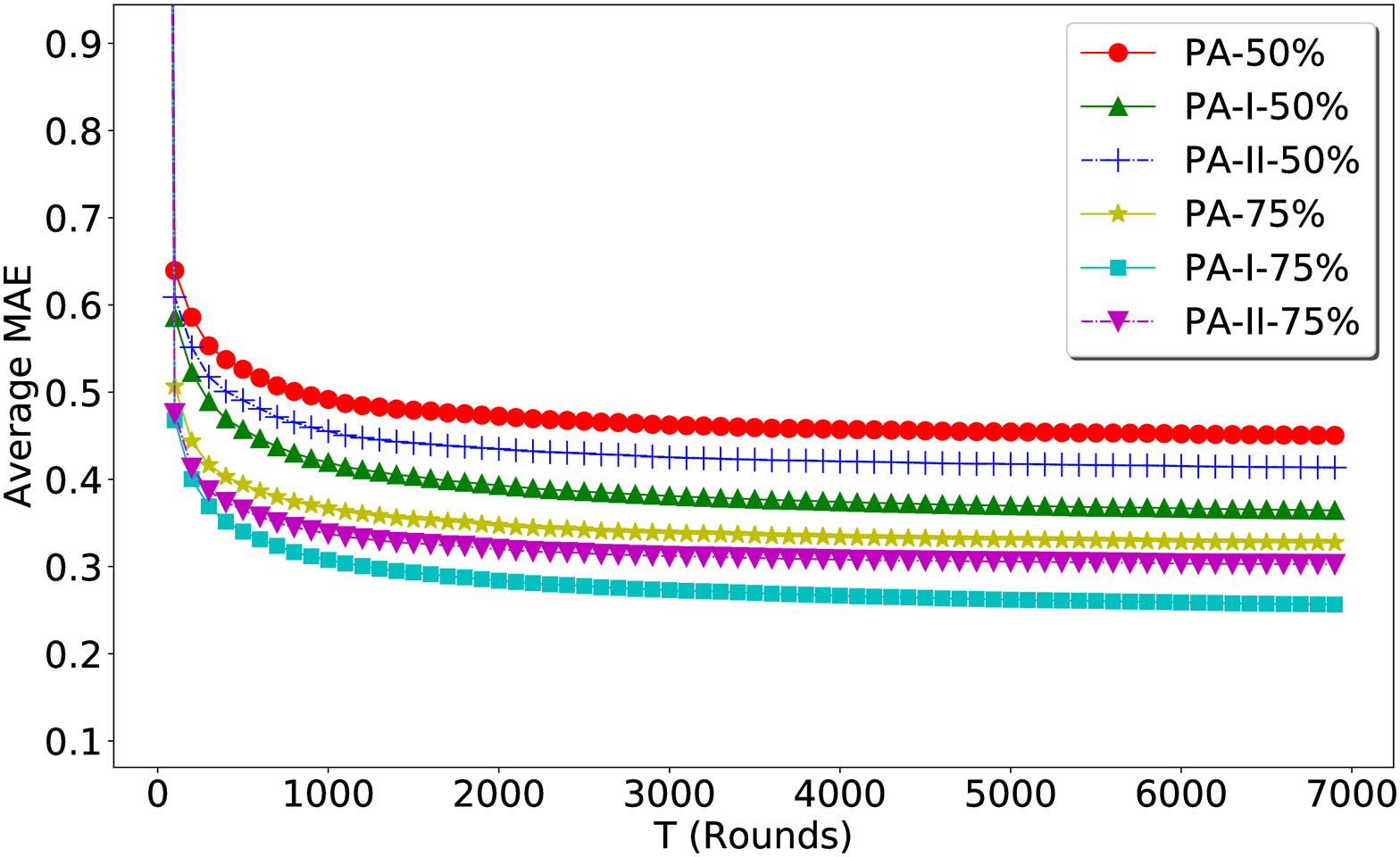} \\
California ( 50 \% and 75\% partial labels) \\
\includegraphics[scale=.23]{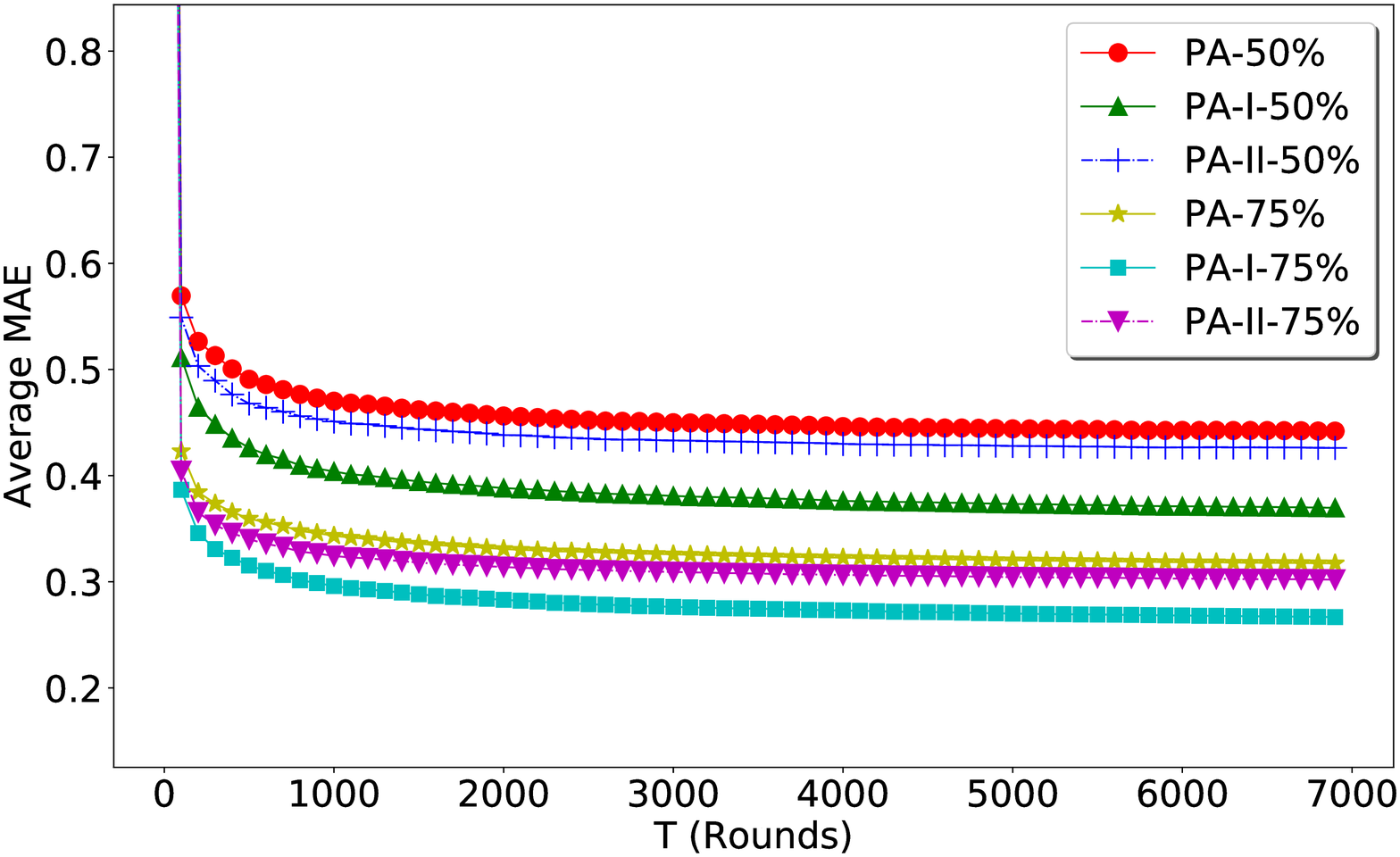} \\
Abalone (50 \% and 75\% partial labels) \\
\includegraphics[scale=.23]{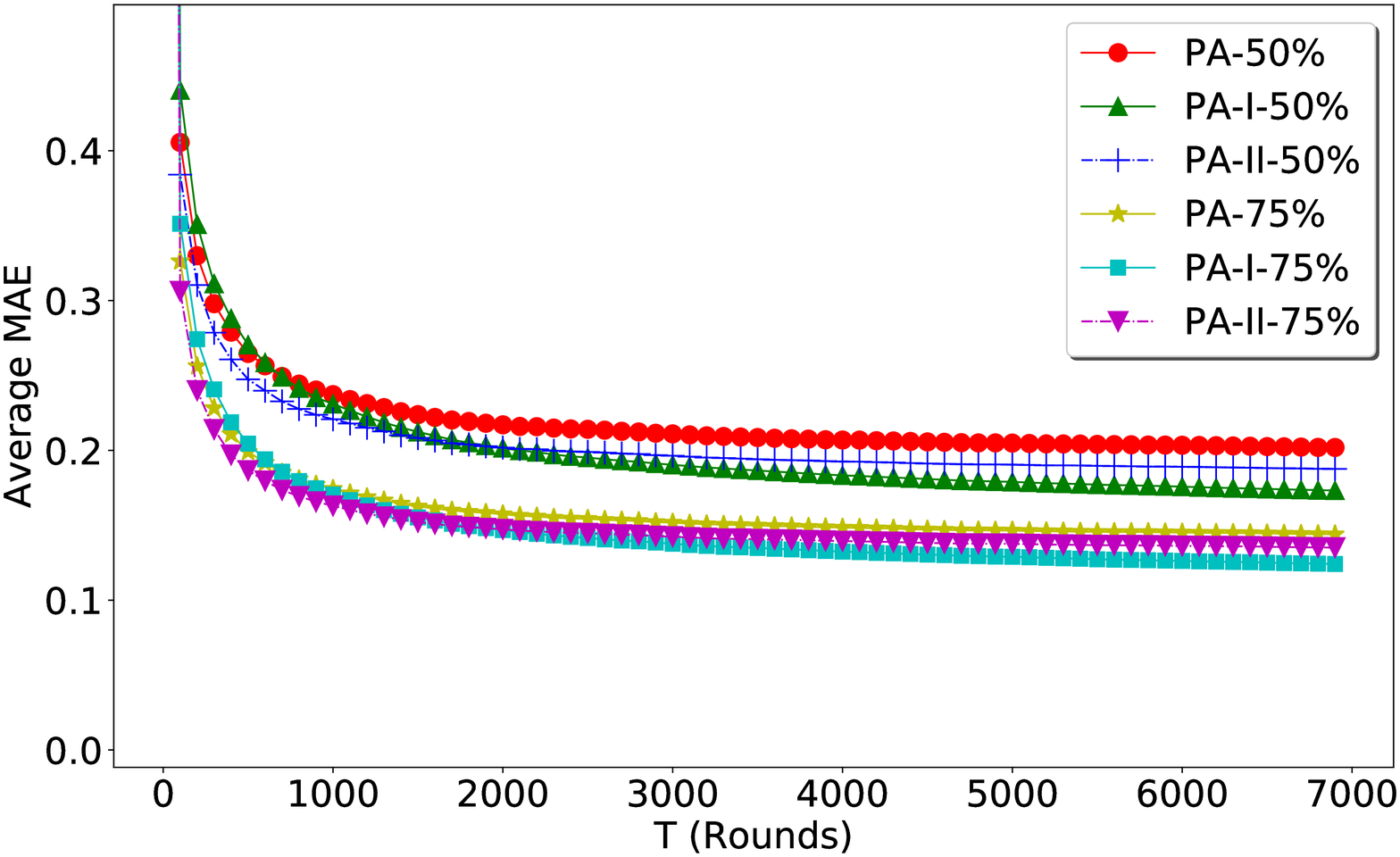}\\
Parkinson (50 \% and 75\% partial labels) \\
\includegraphics[scale=.23]{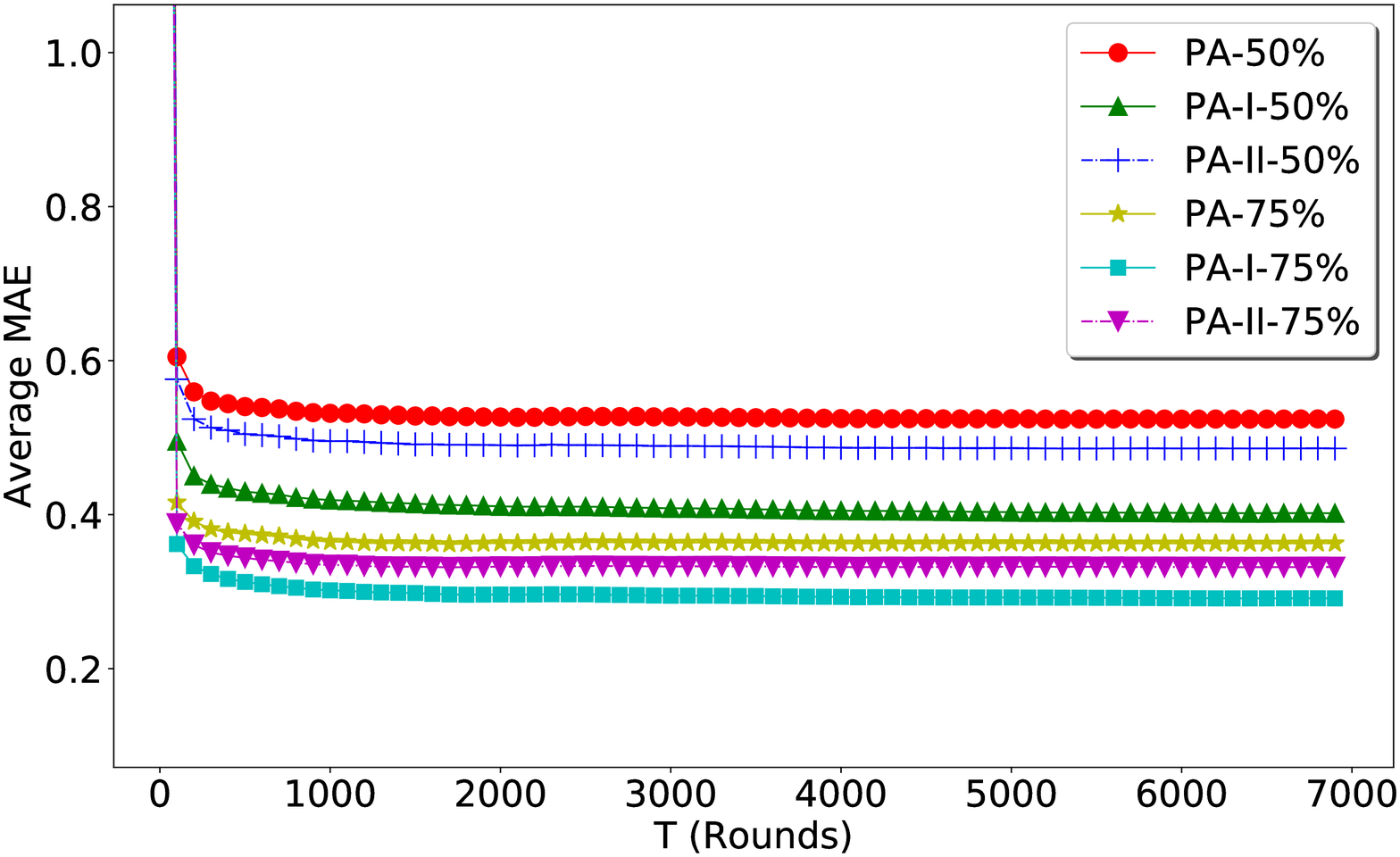}\\
MSLR (50 \% and 75\% partial labels)
\end{tabular}
\caption{Experiment: Varying the fraction of interval labels. Average MAE performance decreases by increasing the fraction of partial labels in the training set. MAE is computed considering partial labels.}
\label{fig:Vary-Frac-Int}
\end{center}
\end{figure}

\subsection{Results with Varying the Fraction of Labels}
We vary the fraction of partial labels (50\% and 75\%). We compute the average $MAE$ after every trial with the same interval label used for updating the hypothesis. We repeat the process 100 times and average the instantaneous losses across the 100 runs. The results are shown in Figure~\ref{fig:Vary-Frac-Int}. We see that for all the datasets the average $MAE$ decreases faster as compared to the number of trial $T$. Also, the average $MAE$
decreases with the increase in the fraction of interval labels. This happens because the
allowed range for predicted rank is more when we use interval labels for computing $MAE$.

\section{Conclusions}
In this paper we proposed online passive-aggressive algorithms for learning to rank. A very important feature of the proposed algorithms is that it also works for instances having interval labels. This becomes useful when annotators are unable to give a fixed label to an instance. We presented three variants of PA algorithms namely PA, PA-I and PA-II. We find the exact solution of the optimization problem at every trial. Our method is based on finding the support classes $S_{l}^{t}$ and $S_{r}^{t}$ at each instant using the SCA algorithms. These sets describe the thresholds to be updated at a trial. Advantage of our method is that the ordering of the thresholds is maintained implicitly and this has been proved theoretically in this paper. In addition to this, we have also given mistake bounds on all the three variants of the algorithm. Practical experiments show that our proposed algorithms perform better than the other algorithms (PRank and Multiclass Percpetron) even when we train our algorithms using interval labels. 

\begin{appendices}
\section{Proof of Theorem~\ref{thm:PA-I}: Mistake Bound Proof for PA-I in General Case}\label{app:PA-I}
\begin{proof}
We use the primal-dual framework proposed in \cite{Shalev-Shwartz2007,10.1007/11776420_32} to get the bound. In that framework, online learning is posed as a task of incrementally increasing the dual objective function. The dual optimization problem (${\cal D}$) of the regularized risk under $L_I^{IMC}$ (considering all $T$ examples) is  
\begin{align*}
&\underset{\alphaa^1\ldots\alphaa^T}{\max}\;\sum_{t=1}^T(\sum_{j=1}^{y_l^t}\lambda_j^t+\sum_{j=y_r^i}^{K-1}\mu_j^t)-\frac{1}{2}\Vert \sum_{t=1}^T(\sum_{j=1}^{y_l^t}\lambda_j^t-\sum_{j=y_r^i}^{K-1}\mu_j^t)\xx^t\Vert^2\\
&\;\;\;\;-\frac{1}{2}\sum_{j=1}^{K-1}\left(\sum_{t=1}^T\left(\mu_j^t\I_{\{j\geq y_r^t\}}-\lambda_j^t\I_{\{j\leq y_l^t-1\}}\right)\right)^2\\
&s.t. \;0\leq \lambda_j^t\leq C,\;t\in [T],\;j=1\ldots y_l^t-1\\
&\;\;\;\;\;\;0\leq \mu_j^t\leq C,\;t\in [T],\;j=y_r^t\ldots K-1
\end{align*}
where $\alphaa^t=[\lambda_1^t\;\ldots\;\lambda_{y_l^t-1}\;0\;\ldots\;0\;\mu_{y_r^t}\;\ldots\;\mu_{K-1}^t]\in \R^{K-1}$. Let $\Omega=(\alphaa^1,\ldots,\alphaa^T)$. 
PA-I can be viewed as finding a sequence of $\Omega^1,\ldots,\Omega^{T+1}$ where $\Omega^{t+1}=(\alphaa^1_{t+1},\ldots,\alphaa^T_{t+1})$ is the maximizer of the following problem.
\begin{align*}
\underset{\Omega}{\max}\;\;{\cal D}(\Omega)\qquad s.t.\;\; \alphaa^s={\bf 0},\;\;\forall s> t 
\end{align*}
%Thus ${\cal D}(\Omega^{t+1})$ does not depend on examples which have not been observed yet.
PA-I updates are as follows. $\alphaa_{t+1}^i=
\alphaa_t^i,\;\forall i\neq t$. $\alphaa^t_{t+1}=[\lambda_1^t\;\ldots\;\lambda_{y_l^t-1}^t\;0\;\ldots\;0\;\mu_{y_r^t}^{t}\;\ldots\;\mu_{K-1}^t]$ where $\lambda_i^t=\min(C,l_i^t-a^t\Vert \xx^t\Vert^2),\;i=1\ldots y_l^t-1$ and $\mu_i^t=\min(C,l_i^t+a^t\Vert\xx^t \Vert^2),\;i=y_r^t\ldots K-1$.
Increment in ${\cal D}$ after trial $t$ is,
\begin{align*}
&\nonumber {\cal D}(\Omega^{t+1})-{\cal D}(\Omega^{t})= -\frac{1}{2} (\sum_{i=1}^{y_l^t-1}\lambda_i^t-\sum_{i=y_r^t}^{K-1}\mu_i^t)^2\Vert\xx^t\Vert^2-\frac{1}{2}\sum_{i=1}^{y_l^t-1}(\lambda_i^t)^2\\
&-\frac{1}{2}\sum_{i=y_r^t}^{K-1}(\mu_i^t)^2+ \sum_{i=1}^{y_l^t-1}\lambda_i^t(1-\ww^t.\xx^t+\theta_i^t)+ \sum_{i=y_r^t}^{K-1}\mu_i^t(1+\ww^t.\xx^t-\theta_i^t)
\end{align*}
where $\theta_i^t=\sum_{s=1}^{t-1}\left(\mu_i^s\I_{\{i\geq y_r^s\}}-\lambda_i^s\I_{\{i\leq y_l^s-1\}}\right),\;i\in[K-1]$ and $\ww^t=\sum_{s=1}^{t-1}a^s\xx^s$. Note that $\lambda_i^t>0,\;i\in S_l^t$ and $\mu_i^t>0,\;i\in S_r^t$. Using $a^t=\sum_{i\in S_l^t}\lambda_i^t-\sum_{i\in S_r^t}\mu_i^t$, we get,
\begin{align}
&\nonumber {\cal D}(\Omega^{t+1})-{\cal D}(\Omega^{t})= %\sum_{i\in S_l^t}\lambda_i^t(l_i^t-\frac{1}{2}\lambda_i^t)+ \sum_{i\in S_r^t}\mu_i^t(l_i^t-\frac{1}{2}\mu_i^t)\\
%\nonumber &\;\;\;\;\;\;-\frac{1}{2}a^t\Vert\xx^t\Vert^2\left(\sum_{i\in S_l^t}\lambda_i^t-\sum_{i\in S_r^t}\mu_i^t\right) \\
%\nonumber &=
\sum_{i\in S_l^t}\lambda_i^t(l_i^t-a^t\Vert \xx^t\Vert^2-\frac{\lambda_i^t}{2})\\
\nonumber &+\sum_{i\in S_r^t}\mu_i^t(l_i^t+a^t\Vert \xx^t\Vert^2-\frac{\mu_i^t}{2})+\frac{1}{2}a^t\Vert\xx^t\Vert^2\left(\sum_{i\in S_l^t}\lambda_i^t-\sum_{i\in S_r^t}\mu_i^t\right)\\
 &\geq C[\sum_{i\in S_l^t}\gamma(l_i^t-a^t\Vert \xx^t\Vert^2)+\sum_{i\in S_r^t}\gamma(l_i^t+a^t\Vert \xx^t\Vert^2)]
\label{eq:dual-diff}
\end{align}
where $\gamma(z)=\frac{1}{C}\left(\min(z,C)\left(z-\frac{1}{2}\min(z,C)\right)\right)$ \cite{Shalev-Shwartz2007}. Note that ${\cal D}(\Omega^0)=0$. Summing Eq.~(\ref{eq:dual-diff}) from $t=1$ to $T$, we get
\begin{align*}
&{\cal D}(\Omega^{T+1})=\sum_{t=1}^T\left({\cal D}(\Omega^{t+1})-{\cal D}(\Omega^t)\right)\\
&\geq C\sum_{t=1}^T\left(\sum_{i\in S_l^t}\gamma(l_i^t-a^t\Vert \xx^t\Vert^2)+\sum_{i\in S_r^t}\gamma(l_i^t+a^t\Vert \xx^t\Vert^2)\right)
\end{align*}
Note that $\gamma(.)$ is a convex function \cite{Shalev-Shwartz2007}. Thus,
\begin{align*}
&{\cal D}(\Omega^{T+1})\geq C\sum_{t=1}^T\left(\sum_{i\in S_l^t}\gamma(l_i^t-a^t\Vert \xx^t\Vert^2)+\sum_{i\in S_r^t}\gamma(l_i^t+a^t\Vert \xx^t\Vert^2)\right)\\
&\geq CT\gamma\left(\frac{1}{T}\sum_{t=1}^T\left(\sum_{i\in S_l^t}(l_i^t-a^t\Vert \xx^t\Vert^2)+\sum_{i\in S_r^t}(l_i^t+a^t\Vert \xx^t\Vert^2)\right)\right)
\end{align*}
From the weak duality, we get the following.
\begin{align*}
{\cal D}(\Omega^{T+1})\leq \frac{1}{2}\left(\Vert\ww\Vert^2+\Vert \thetaa\Vert^2\right)+C\sum_{t=1}^T\left(\sum_{i=1}^{y_l^t-1}l_i^{t*}+\sum_{i=y_r^t}^{K-1}l_i^{t*}\right)
\end{align*}
Comparing the upper bound and the lower bound on ${\cal D}(\Omega^{T+1})$, we get
\begin{align*}
&\frac{1}{T}\sum_{t=1}^T\left(\sum_{i\in S_l^t}(l_i^t-a^t\Vert \xx^t\Vert^2)+\sum_{i\in S_r^t}(l_i^t+a^t\Vert \xx^t\Vert^2)\right)\\
&\leq \gamma^{-1}\left(\frac{1}{2CT}\left(\Vert\ww\Vert^2+\Vert \thetaa\Vert^2\right)+\frac{1}{T}\sum_{t=1}^T\left(\sum_{i=1}^{y_l^t-1}l_i^{t*}+\sum_{i=y_r^t}^{K-1}l_i^{t*}\right)\right)
\end{align*}
We note that 
\begin{align}
\nonumber &\frac{1}{T}\sum_{t=1}^T\left(\sum_{i\in S_l^t}(l_i^t-a^t\Vert \xx^t\Vert^2)+\sum_{i\in S_r^t}(l_i^t+a^t\Vert \xx^t\Vert^2)\right)\\
\nonumber &\geq %\frac{1}{T}\sum_{t=1}^T\left(\sum_{i\in S_l^t}(l_i^t-a^t\Vert \xx^t\Vert^2)+\sum_{i\in S_r^t}(l_i^t-a^t\Vert \xx^t\Vert^2)\right)\\\nonumber &=
\frac{1}{T}\sum_{t=1}^T\sum_{i\in S_l^t\cup S_r^t}l_i^t -\frac{1}{T}\sum_{t=1}^Ta^t\Vert\xx^t\Vert^2(|S_l^t|+|S_r^t|)\\
&\geq \frac{1}{T}\sum_{t=1}^T\sum_{i=1}^{K-1}l_i^t - CR^2(K-c-1)^2
\label{eq:lb}
\end{align}
where we used the fact that $\Vert\xx^t\Vert^2\leq R^2,\;\forall t\in [T]$, $|S_l^t|+|S_r^t|\leq K-c-1,\;\forall t\in [T]$ and $a^t\leq C(K-c-1),\;\forall t\in [T]$. From \cite{Shalev-Shwartz2007}, we know that $\gamma^{-1}(z) \leq  z+\frac{1}{2}C$. Thus,
\begin{align}
\nonumber & \gamma^{-1}\left(\frac{1}{2CT}\left(\Vert\ww\Vert^2+\Vert \thetaa\Vert^2\right)+\frac{1}{T}\sum_{t=1}^T\left(\sum_{i=1}^{y_l^t-1}l_i^{t*}+\sum_{i=y_r^t}^{K-1}l_i^{t*}\right)\right)\\
 & \leq \frac{1}{2CT}\left(\Vert\ww\Vert^2+\Vert \thetaa\Vert^2\right)+\frac{1}{T}\sum_{t=1}^T\left(\sum_{i=1}^{y_l^t-1}l_i^{t*}+\sum_{i=y_r^t}^{K-1}l_i^{t*}\right)+\frac{C}{2}
\label{eq:ub}
\end{align}
Using Eq.~(\ref{eq:lb}) and (\ref{eq:ub}), we get
\begin{align*}
\sum_{t=1}^T\sum_{i=1}^{K-1}l_i^t &\leq \frac{1}{2C}\Vert {\bf v}\Vert^2+\sum_{t=1}^T\sum_{i=1}^{K-1}l_i^{t*}+CT[\frac{1}{2}+R^2(K-c-1)^2]
\end{align*}
We use $C=\frac{\Vert {\bf v}\Vert}{\sqrt{T(1+2R^2(K-c-1)^2)}}$ as it minimizes the upper bound. Using that, we get
\begin{align*}
\sum_{t=1}^T\sum_{i=1}^{K-1}l_i^t &\leq \sum_{t=1}^T\sum_{i=1}^{K-1}l_i^{t*}+\sqrt{T\left(1+2R^2(K-c-1)^2\right)}\Vert {\bf v}\Vert
\end{align*}
\end{proof}
\end{appendices}

\bibliography{nips_2018_bib}
\bibliographystyle{plain}

\end{document}